\documentclass[twocolumn,10pt]{article}

\usepackage[letterpaper,margin=0.75in,columnsep=0.25in]{geometry}
\usepackage{amsmath,amssymb,amsthm}
\usepackage{bm}
\usepackage{booktabs}
\usepackage{multirow}
\usepackage{graphicx}
\usepackage{enumitem}
\usepackage{framed}
\usepackage{float}
\newfloat{algofloat}{t}{loa}
\floatname{algofloat}{Algorithm}
\newenvironment{algo}[1]{%
  \begin{algofloat}\caption{#1}\begin{framed}\small
  \begin{enumerate}[leftmargin=1.6em,itemsep=1.5pt,label=\arabic*:]}%
  {\end{enumerate}\end{framed}\end{algofloat}}
\usepackage{microtype}
\usepackage[colorlinks=true,linkcolor=blue,citecolor=blue,urlcolor=blue]{hyperref}

\newtheorem{proposition}{Proposition}
\newtheorem{definition}{Definition}
\newtheorem{remark}{Remark}
\usepackage{tikz}
\newcommand{\Otap}{\textsc{Otap}}      
\newcommand{\OtapS}{\textsc{OnTrack}}  

\newcommand{\Tm}{\mathbf{T}}          
\newcommand{\Cm}{\mathbf{C}}          
\newcommand{\Dm}{\mathbf{D}}          
\newcommand{\Gm}{\mathbf{G}}          
\newcommand{\Km}{\mathbf{K}}          
\newcommand{\mub}{\bm{\mu}}
\newcommand{\nub}{\bm{\nu}}
\newcommand{\ones}{\mathbf{1}}
\newcommand{\refset}{\mathcal{R}}
\newcommand{\actset}{\mathcal{A}}
\newcommand{\lookahead}{\mathcal{L}}
\newcommand{\cov}{\mathrm{cov}}
\newcommand{\leak}{\ell}
\newcommand{\mbar}{\bar{m}}

\title{\OtapS{}: Real-Time Monitoring and Intervention in LLM Agent
Trajectories via Streaming Structure-Aware Optimal Transport}

\author{%
  Babak Barazandeh\thanks{Corresponding author}, 
  Connor Swanson, 
  Chinmay Kulkarni, 
  Nikhil Mungel \\
  \small Cribl AI Research Lab \\
  \small \texttt{\{bbarazandeh, cswanson, ckulkarni, nmungel\}@cribl.io}
}
\date{}
\begin{document}
\maketitle


\begin{abstract}
    
Agents are deployed in a wide range of applications ranging from trip planners and stock trading to IT incident triage. In most of these applications, agents, by relying on the LLM, work autonomously with minimal rule-based safeguarding or no supervision, which results in cost and safety issues due to irreversible actions taken by agents. Recent works have tried to resolve these issues by either using another safeguard agent to monitor its behavior or doing a post-hoc evaluation of the logs. The first adds cost and latency to every step; the second delivers its verdict after the run, when the tokens are already burned and the damage done. 
To overcome this issue, we propose \OtapS{}, a streaming monitoring mechanism that, as the agent progresses in its task, compares its steps and the dependencies between them against recorded successful runs and alerts the user of potential issues or blocks the agent, taking about a millisecond per step. We study this problem in three regimes of decreasing access: full reference access (where we have access to historical runs of the task or similar variants, as well as tool schemas), the intermediate regime of accessing only tool schemas, and the final case of no prior knowledge—just the steps as they are being generated by the agent. Naturally, the expectation of the monitoring capability of \OtapS{} is reduced as data access is reduced, ranging from the detection of any violation of the plans to the detection of loops, stalls, and repeated tool calls without new results. Finally, we evaluate the performance of our proposed solution using SWE-bench trajectories. Evaluating based on the first 8 steps, the proposed method ranked the failing trajectories below succeeding ones better than content similarity-based approaches (+0.057 AUROC). Additionally, by deploying an abort policy on top of it, we save $\sim$18\% of the compute that would have been burned on agents heading to failure, and 83\% of runs that were interrupted were actually heading to failure (5 out of every 6 aborts were correct).

\end{abstract}

\section{Introduction}

Agents are being deployed across different applications and industries. One of their advantages is the autonomy that they bring during execution. With minimal human interference, they can execute a given task. They can automatically plan a trip, make a reservation, and complete the payment. However, this autonomy, if unchecked, can cause a damaging impact from cost and safety aspects.

On the cost side, these agents are using significant resources for a single step in a task, but they could easily suffer from hallucinations, out-of-order tool calls, or infinite loops—all of which are costly due to the underlying LLM usage. From a safety perspective, these agents—unmonitored—can delete or modify a record by mistake or, as in recent incidents, break out of their sandbox environment and start hacking other systems as reported in the OpenAI–HuggingFace incident~\cite{floridi2026emperor}. All these aspects require developing a monitoring system for agents.

The existing methods rely either on offline (after-the-fact) evaluations, where traces and logs are evaluated against a rule-based mechanism or language models are used as the judge~(i.e., LLM-as-a-Judge~\cite{zheng2023judging}). However, by the time an issue is found, the run has already been paid for. Online safeguards, on the other hand, monitor agents' execution using another agent. First, this method is costly and also adds latency to the execution due to an extra call to an LLM. On top of that, to reduce the cost, these safeguards are usually equipped with smaller models which struggle to detect the mistakes of the larger models, or a dedicated model is trained to score every step which renders the mechanism a black box~\cite{lightman2024verify}. Moreover, none of these methods look at the structure or the \emph{topology} of the runs, i.e., which steps depend on which, even though that is where loops or out-of-order calls show up.

To overcome this issue, we propose a streaming monitoring mechanism called \OtapS{}.
This approach models the agent's progress toward completing the task as a directed acyclic graph (DAG) in which each step of the graph corresponds to one node and, if a step's output is used in another step, there is an edge between them.
As the run unfolds, the monitor compares this growing graph against reference solutions using optimal transport and flags potential issues in about a millisecond per step. The references are optional, however—the monitor runs with whatever is available. With tool schemas alone, it still catches loops and stalls and blocks an irreversible action whose declared prerequisites are missing; with nothing but the event stream, loop and stall detection remain. Nothing is reconfigured across these regimes; parts of the monitor simply switch off as their inputs disappear.

Note that modeling agent task execution as a DAG and aligning it against reference solutions with optimal transport were already proposed in Otap~\cite{barazandeh2026otap}. However, their evaluation was a post hoc evaluation against complete references. Our contribution is making this alignment work on a live stream, which cannot be done by just re-running the batch method after each step due to the following structural challenges:
\begin{description}[leftmargin=1.2em,itemsep=2pt]

\item[S1 (prefix vs.\ whole).] Using a post hoc evaluation method in the middle of the trajectory will penalize the agent for every reference step it has not reached yet. So, the comparison punishes the agent for being early rather than being wrong—exactly the stage where the monitor most needs to stay quiet. Additionally, considering the portion of the reference that has been covered mid-stream is meaningless. For example, covering 10 of the 30 reference steps could be perfect progress if the agent is at step 10, and can be an indicator of a stuck agent if it is at step 80.

\item[S2 (provisional structure).] Edges are discovered later when a step (node) uses an output of another step. As a result, a new step immediately after just being generated can seem disconnected and might look like the agent is hallucinating. Judging new steps on their structure immediately can flood the system with lots of false positives. Note that extracting dependencies in post hoc evaluation is a dominant source of error~\cite{barazandeh2026otap}, and it even gets harder in streaming when the monitor cannot look ahead. 

\textbf{Note}: \textbf{S1} is concerned about the steps not taken by the agent when comparing with reference, and \textbf{S2} is concerned about the steps that the agent has taken but are not clear yet. As a simple example, the agent does a search at step 5, which at the moment the step looks disconnected. Ten steps later, the agent edits that searched file. That's when the edge appears and the first step becomes clear.
\item[S3 (compute budget).] The evaluator must produce a verdict before the
agent's next tool call. Re-solving the full alignment from scratch at every
step is unnecessary and, as the trajectory grows, quickly becomes
infeasible.
\item[S4 (decisions, not scores).] Post hoc evaluation methods can output a score, and the operator can define a threshold to evaluate if the execution was successful or not. The streaming monitoring mechanism can't. It instead outputs an action: \textit{continue, warn}, or \textit{halt}. To better understand this, consider an agent at step 5 that takes one hallucinated step. This is one fifth of all steps, so this will result in the aggregated score going under the threshold, while if this happened at step 30, it might move the score slightly. As a result, the same score value means different things at different lengths. So, the score is informative, but it should not be used as a trigger and no fixed threshold can drive the decision. A streaming monitoring mechanism should focus on per-step signals whose meaning stays the same as the trajectory grows.
\end{description}

\OtapS{} addresses all four challenges. For S1, the monitor only expects
the parts of the reference the agent could have reached so far, so an
agent is never punished for being early (Sec.~\ref{sec:frontier}). For
S2, a new step's structural influence starts at zero and phases in as
later steps use its output; edges discovered late are inserted into the
graph in place (Sec.~\ref{sec:structure}). For S3, instead of re-solving
the alignment at every step, we take one warm-started iteration of the
batch solver per event --- about a millisecond per step --- and run the
solver to convergence only before acting on an intervention
(Sec.~\ref{sec:cg}). For S4, decisions come from per-step signals with a
short grace window for new steps, never from the aggregate score
(Sec.~\ref{sec:verdicts}). At the end of the run, one final solve makes
the monitor's output identical to the batch evaluator's --- provided the
dependencies discovered online match what batch extraction would find
(Prop.~\ref{prop:anytime}). 
\section{Related Work}

\paragraph{Trajectory evaluation.} Evaluating an agent trajectory still means choosing between a cheap score and a detailed one. Outcome-only benchmarks~\cite{mialon2024gaia,zhou2024webarena,liu2024agentbench} collapse a run to pass or fail, so the steps that produced the outcome never enter the score. Text metrics that match a trace against a reference (exact match, BLEU/ROUGE, embedding alignment~\cite{zhang2020bertscore}) do look inside the run, but they assume the task has one correct wording. A broken plan that resembles the reference can then rank above a valid plan written differently, and~\cite{barazandeh2026otap} shows that this ranking can fall below chance. LLM-as-a-judge~\cite{zheng2023judging} and process reward models~\cite{lightman2024verify} score individual steps, so they are not stuck on a single reference. The cost is practical: a judge adds latency, cost, and run-to-run noise, and a reward model needs step-level annotation before it can be trained. These methods also wait until the run has finished. Observability platforms that sit on live traffic store detailed traces, but the checks they apply while a run is open are schema validation, token heuristics, and judges applied asynchronously to spans that have already closed. We are not aware of a deployed system that reasons about the structure of the execution graph while that graph is still growing.

\paragraph{Optimal transport.} The solver is built from standard parts. Entropic regularization and Sinkhorn iterations~\cite{cuturi2013sinkhorn} made optimal transport usable in practice. Unbalanced variants~\cite{chizat2018unbalanced} relax exact conservation of mass, so mass that should not be matched can be discarded. Gromov--Wasserstein~\cite{memoli2011gw} compares relational structure, and its fused form~\cite{vayer2019fgw} does the same for attributed graphs. Warm-started incremental Sinkhorn schemes already exist for problems that change slowly~\cite{peyre2019computational}. \OtapS{} uses that warm start. What changes between solves is more specific: the support grows by one row with each event, and the reference marginal moves as the agent progresses. That combination, as far as we know, is new.

\paragraph{Transport-based trajectory scoring.} The closest prior work is \Otap{}~\cite{barazandeh2026otap}, which scores a completed trajectory after the fact by aligning attributed execution DAGs. We adopt its graph formalism and its node costs. Three findings from its ablations constrained the streaming design. The structure term is load-bearing. Restricting the method to a single reference costs 16.9 AUROC points. And the largest source of error is heuristic dependency extraction. From there, this paper asks a different question. Frontier masking, the streaming solver, provisional-structure handling, the verdict engine, and the calibration finding have no counterpart in the batch setting, because a batch scorer only has to say how good a finished run was. \OtapS{} has to decide whether the run should be allowed to continue.

\section{Problem Formulation}
\label{sec:problem}

To formulate the agent behavior, consider how an agent tackles a problem. It generates a stream of steps in which, at each step it describes what it is doing, calls a tool with the required arguments, and receives its result. Steps here are not independent from one another; the artifact (a queried database, a tool result) produced at this step can be consumed as input to other steps down the road. As a result, this execution is not a flat sequence; it is a growing dependency graph. Importantly, these links are not always visible at the moment a step is taken. The importance of one step may only become clear when a later step uses its output. The following definition formalizes this trajectory.

\begin{definition}[Streamed trajectory prefix]
We define the agent's execution procedure as a stream of events $s_1, s_2, \dots$, where each event is a tuple consisting of $s_t = (a_t, \tau_t, c_t, e_t, \zeta_t)$. Here, $a_t$ is the action the agent is taking, $\tau_t$ is the tool identifier, $c_t$ is the arguments it uses, $e_t$ is the event's generated artifact, and $\zeta_t$ is the dependency metadata including parent indices and consumed/generated artifact identifiers, which both might be incomplete depending on the harness. The prefix graph $P_t = (V_t, E_t, \phi)$ has one node per event and contains all dependency edges known at time $t$, with $E_t$ potentially gaining new edges between existing nodes at a later time (retroactive discovery), and $\phi$ assigns each node the embeddings of its text fields.
\end{definition}
Note that the metadata at time $t$, $\zeta_t$, is harness-related and depends on its logs. Consider an example in which an agent is editing a config file at step 15 that was found during a search at step 5. The  event itself is the same under any harness:

\begin{quote}\footnotesize\begin{verbatim}
event 15:
  action: "edit the DB config to fix the timeout"
  tool:   edit
  args:   {file: "src/config.py", ...}
  result: "file saved"
\end{verbatim}\end{quote}

\noindent but $\zeta_{15}$ comes in three grades, depending on the harness:

\begin{description}[leftmargin=1.2em, itemsep=3pt, topsep=3pt]
\item[Full metadata.] The harness declares dependencies outright
(\texttt{parents:\,[5]}, plus consumed/produced artifact identifiers);
the edge $(5,15)$ is read off directly.

\item[Partial metadata.] No parents are declared, but the tool wrapper
logs the objects it touched (\texttt{consumed:\,["src/config.py"]}).
The monitor maintains a \emph{ledger} mapping each artifact identifier
to the step that first surfaced it, so a lookup recovers step 5 and the
same edge is inferred.

\item[No metadata.] The common case in practice, and
the case of our evaluation corpus. The monitor manufactures $\zeta_t$
itself: tool schemas or built-in rules specify where each tool's
artifacts appear in its arguments --- here, the \texttt{edit} schema
marks its \texttt{file} argument as consumed, yielding
\texttt{src/config.py} --- and the ledger lookup then proceeds exactly
as in the partial-metadata case.
\end{description}

\noindent This extraction is deterministic string matching, not an LLM
call --- cheap and auditable --- but it fails when a dependency leaves
no matchable identifier. Suppose the agent instead ran
\texttt{sed -i 's/timeout=5/timeout=30/' \$(find . -name '*.py' | head -1)}
while describing it as ``fix the timeout setting we saw earlier'': the
target file is resolved inside the shell at execution time and the
reference to step 5 lives only in the agent's context, so no harness,
however faithful, records a traceable link. This is why $\zeta_t$ is
\emph{possibly incomplete} by definition, and why
Sec.~\ref{sec:structure} treats the graph as provisional.

Besides the agent's own stream, the monitor may be given
\emph{references}: known-good executions of the same task --- recorded
runs that solved it correctly, whether curated by hand or mined from past
successes --- represented in the same form as the stream. References are
optional (see the regimes below). When they exist, they form
a reference set $\refset = \{R_1,\dots,R_K\}$, where reference $R_k$ has
$m_k$ steps. For each reference, we precompute a
structure matrix $\Dm^{R_k}$ that records, for every pair of its steps,
how far apart they sit in the dependency graph: the number of dependency
hops from one to the other, normalized to $[0,1]$, with unconnected
pairs maximally distant. This is the same construction used for post-hoc
scoring in~\cite{barazandeh2026otap}; we default to unsigned distances
(edge direction ignored), a choice revisited in
Sec.~\ref{sec:structure}. The prefix graph carries an analogous matrix
$\Dm^{P}$, grown row by row as events arrive. Figure~\ref{fig:setup} summarizes the setup.

\begin{figure}[t]
\centering
\begin{tikzpicture}[font=\scriptsize, >=stealth,
  ev/.style={draw, rounded corners=1pt, minimum width=12pt, minimum height=11pt, inner sep=1pt},
  pn/.style={draw, circle, inner sep=1.3pt, fill=white},
  rn/.style={draw, circle, inner sep=1.3pt, fill=black!12},
  dep/.style={->, semithick}]
\node[ev] (s1) at (0,2.1) {$s_1$};
\node[ev] (s2) at (0.5,2.1) {$s_2$};
\node[ev] (s3) at (1.0,2.1) {$s_3$};
\node[ev] (s4) at (1.5,2.1) {$s_4$};
\node at (2.0,2.1) {$\cdots$};
\node[anchor=west, text=black!60] at (-0.25,2.55) {event stream $(a_t,\tau_t,c_t,e_t,\zeta_t)$};
\node[pn] (p1) at (0.2,1.1) {};
\node[pn] (p2) at (0.9,1.35) {};
\node[pn] (p3) at (0.9,0.85) {};
\node[pn] (p4) at (1.6,1.1) {};
\draw[dep] (p1) -- (p2);
\draw[dep] (p1) -- (p3);
\draw[dep] (p2) -- (p4);
\draw[dep, dashed] (p3) -- (p4);
\node[text=black!60] at (1.5,0.62) {\tiny retroactive};
\node[text=black!60] at (0.9,0.35) {prefix graph $P_t$};
\draw[->, black!50] (0.9,1.9) -- (0.9,1.62);
\draw[step=0.15, black!30] (0.6,-0.6) grid +(0.6,0.6);
\draw (0.6,-0.6) rectangle +(0.6,0.6);
\node[text=black!60, align=center] at (0.9,-0.9) {$\Dm^{P}$ (grows\\ row by row)};
\draw[<->, thick, black!60] (2.15,1.0) -- (2.95,1.0);
\node[text=black!60] at (2.55,1.25) {align};
\node[text=black!60] at (2.55,0.75) {\tiny $\Tm$ (Sec.~4)};
\begin{scope}[shift={(3.55,1.42)}, opacity=0.3]
\node[rn] (q1) at (0,0) {}; \node[rn] (q2) at (0.6,0.25) {};
\node[rn] (q3) at (0.6,-0.25) {}; \node[rn] (q4) at (1.2,0) {};
\draw[dep] (q1)--(q2); \draw[dep] (q1)--(q3); \draw[dep] (q2)--(q4); \draw[dep] (q3)--(q4);
\end{scope}
\begin{scope}[shift={(3.3,1.0)}]
\node[rn] (r1) at (0,0) {}; \node[rn] (r2) at (0.6,0.25) {};
\node[rn] (r3) at (0.6,-0.25) {}; \node[rn] (r4) at (1.2,0) {};
\node[rn] (r5) at (1.8,0) {};
\draw[dep] (r1)--(r2); \draw[dep] (r1)--(r3); \draw[dep] (r2)--(r4);
\draw[dep] (r3)--(r4); \draw[dep] (r4)--(r5);
\end{scope}
\node[text=black!60] at (4.2,0.35) {references $R_1,\dots,R_K$};
\draw[step=0.15, black!30] (3.9,-0.6) grid +(0.6,0.6);
\draw (3.9,-0.6) rectangle +(0.6,0.6);
\node[text=black!60, align=center] at (4.2,-0.9) {$\Dm^{R_k}$\\ (precomputed)};
\end{tikzpicture}
\caption{The monitor's inputs. Events arrive one at a time and grow the
prefix graph $P_t$ (dashed: a dependency discovered retroactively).
References $R_1,\dots,R_K$ are known-good executions in the same form.
Each graph carries a matrix of pairwise dependency-hop distances
($\Dm^{P}$ grows row by row; $\Dm^{R_k}$ is precomputed), and the
alignment $\Tm$ of Sec.~4 matches the two sides step by step.}
\label{fig:setup}
\end{figure}

\paragraph{Information regimes.}
Orthogonal to the harness's log quality --- which determines how well
the prefix graph can be \emph{built} --- is how much prior knowledge
the monitor is given to compare that graph \emph{against}.
We consider three nested settings, ordered by how much the monitor is
given, and its capabilities degrade explicitly across them. With
\emph{references and tool schemas}, everything below is available:
structural alignment against the references, per-step leakage, and the
pre-execution gate for irreversible actions. With \emph{schemas but no
references}, the monitor can still catch self-evident pathologies ---
loops, stalls, repeated calls that produce nothing new --- and still
block an irreversible action whose declared prerequisites are missing;
what it loses is any notion of deviation from a plan, since there is no
plan to deviate from. With \emph{neither}, only the self-referential
vital signs remain --- the honest minimum, and still enough to flag the
runaway-loop failure mode that dominates real cost overruns
(Sec.~\ref{sec:anomaly}). A fourth axis is reference \emph{quality}:
a hand-curated gold run, a mined past success, and a machine-generated
sketch do not deserve equal trust, so the authority of a verdict is
scaled by reference provenance (Sec.~\ref{sec:regimes}). Figure~\ref{fig:regimes} summarizes both axes.

\begin{figure}[t]
\centering
\begin{tikzpicture}[font=\scriptsize, >=stealth]
\draw[->, thick, black!70] (0,3.05) -- (7.0,3.05);
\node[black!70, anchor=south west] at (0,3.1) {fewer inputs $\longrightarrow$ fewer capabilities};
\draw[dashed, black!45] (2.6,-0.15) -- (2.6,2.9);
\draw[dashed, black!45] (4.8,-0.15) -- (4.8,2.9);
\node[black!60] at (1.3,2.72) {\tiny refs $+$ schemas};
\node[black!60] at (3.7,2.72) {\tiny schemas only};
\node[black!60] at (5.9,2.72) {\tiny bare stream};
\fill[black!12, rounded corners=2pt] (0,1.75) rectangle (2.55,2.45);
\node[anchor=west] at (0.08,2.10) {\textbf{L2} alignment};
\node[black!55, anchor=west] at (2.70,2.10) {\tiny lost: a plan to deviate from};
\fill[black!22, rounded corners=2pt] (0,0.90) rectangle (4.75,1.60);
\node[anchor=west] at (0.08,1.25) {\textbf{L3} pre-execution gate};
\node[black!55, anchor=west] at (4.90,1.25) {\tiny lost: blocking};
\fill[black!35, rounded corners=2pt] (0,0.05) rectangle (7.0,0.75);
\node[anchor=west, text=white] at (0.08,0.40) {\textbf{L1} vital signs \ (always on)};
\node[black!70, anchor=west] at (0,-0.55) {reference quality scales verdict \emph{authority}:};
\fill[black!35, rounded corners=1pt] (0,-1.3) rectangle (2.5,-0.82);
\fill[black!20, rounded corners=1pt] (2.56,-1.3) rectangle (4.78,-0.82);
\fill[black!9, rounded corners=1pt]  (4.84,-1.3) rectangle (7.0,-0.82);
\node[text=white] at (1.25,-1.06) {\tiny gold/mined: intervene};
\node at (3.67,-1.06) {\tiny axiomatic: warn};
\node at (5.92,-1.06) {\tiny generated: inform};
\end{tikzpicture}
\caption{Information regimes. Each capability band runs for as long as
its inputs exist and terminates when they are removed: without
references there is no plan to deviate from (L2 ends); without schemas
there is nothing to gate (L3 ends); the vital signs of L1 never turn
off. Bands are stacked by how long they survive (bedrock at the
bottom), not by layer number; L1--L3 are the monitor's three layers,
defined in the architecture paragraph in Sec.~\ref{sec:4}. The strip underneath shows
the orthogonal quality axis: the richer the provenance of the
references, the more authority a verdict carries
(Sec.~\ref{sec:regimes}).}
\label{fig:regimes}
\end{figure}

\paragraph{Notation.}
Throughout the paper, $\phi$ embeds text into unit vectors, $d_{\cos}$
is cosine distance, and $d_{\mathrm{tool}}(\tau, \tau')$ is a
tool-substitution cost: zero for the same tool, growing toward one as a
substitution becomes less plausible. For a coupling $\Tm$, the row sums
$\Tm\ones$ give the mass each agent step placed and the column sums
$\Tm^{\!\top}\ones$ the mass each reference step received, where
$\ones$ is the all-ones vector. The generalized divergence of
nonnegative vectors is
$\mathrm{KL}(a\,\|\,b) = \sum_i a_i \log(a_i/b_i) - a_i + b_i$ (with
$0\log 0 = 0$; annihilating a marginal entry $b_i$ costs exactly
$b_i$), and the negative entropy of a nonnegative matrix is
$H(\Tm) = \sum_{ij} T_{ij}(\log T_{ij} - 1)$. Per-reference losses are
aggregated by a soft-min with temperature $\tau_{\mathrm{sm}} = 0.05$
(distinct from the tool identifier $\tau_t$). Finally,
$\langle \cdot, \cdot \rangle$ denotes the Frobenius inner product,
$[x]_{+} = \max(x, 0)$, and $[x]_{[0,1]}$ clips to the unit interval.

\paragraph{The streaming problem.}
At every event $t$, output (i) a diagnostic state (running score, per-node
alignment, coverage), and (ii) a verdict
$v_t \in \{\textsf{OK}, \textsf{EXPLORING}, \textsf{WARN}, \textsf{LOOP},$
$\textsf{STALLED}, \textsf{CAUSAL\_INVERSION}, \textsf{OFF\_TRACK},
\textsf{BLOCK}\}$, subject to a per-event compute budget far below tool-call
latency, such that at termination the evaluator produces the same output
as a batch evaluator running the identical implemented procedure~(Sec.~\ref{sec:cg}) on the
completed trajectory.
\section{The \OtapS{} Evaluator}
\label{sec:4}
This section builds the monitor in two parts. We first recall the
batch alignment that \OtapS{} inherits --- the cost table and the
objective --- and the monitor's three-layer architecture; we then
develop the streaming components that answer the four challenges of
the introduction: the mass semantics (Sec.~\ref{sec:mass}), the
frontier mask for S1 (Sec.~\ref{sec:frontier}), the streaming solver
for S3 (Sec.~\ref{sec:cg}), provisional-structure handling for S2
(Sec.~\ref{sec:structure}), and the verdict engine for S4
(Sec.~\ref{sec:verdicts}).
\paragraph{Ingredients: the cost table and the alignment.}
Two matrices are the ingredients~\cite{barazandeh2026otap}. The first is a
\emph{cost table} $\Cm$: entry $C_{ij}$ prices how implausible it is
that agent step $i$ plays the role of reference step $j$, combining the
similarity of their action descriptions, of their arguments, and of the
tools they call:
\begin{align}
C_{ij} ={}& \alpha\, d_{\cos}(\phi(a_i), \phi(a^R_j))
+ \beta\, d_{\cos}(\phi(c_i), \phi(c^R_j)) \notag\\
&+ \delta\, d_{\mathrm{tool}}(\tau_i, \tau^R_j),
\label{eq:nodecost}
\end{align}
where the weights $\alpha, \beta, \delta$ trade off the three
ingredients --- we use $(0.45, 0.20, 0.35)$ throughout, a setting that
worked well across all our experiments and was never retuned; every
component is normalized to $[0,1]$.
The second is the \emph{alignment} itself, a
nonnegative matrix $\Tm$: entry $T_{ij}$ says how much of agent step $i$
is matched to reference step $j$ --- a soft, fractional correspondence
rather than a hard one-to-one assignment. Each agent step carries mass
$\mu_i$ and each reference step mass $\nu_j$; their values are a design
decision with real consequences, settled in Sec.~\ref{sec:mass}. Figure~\ref{fig:toyexample}
illustrates both on a toy example: valid steps place their mass on
their reference counterparts, while an anomalous step's row stays
empty --- and that emptiness is the detection signal.

\begin{figure}[t]
\centering
\begin{tikzpicture}[font=\scriptsize, >=stealth, x=0.46cm, y=0.46cm]
\foreach \x/\l in {0/search,1/read,2/edit,3/test}{
  \node[rotate=35, anchor=south west, black!70] at (\x+0.1,3.05) {\tiny \l};
  \node[rotate=35, anchor=south west, black!70] at (\x+6.6,3.05) {\tiny \l};}
\node[anchor=east, black!70] at (-0.15,2.5) {\tiny grep error};
\node[anchor=east, black!70] at (-0.15,1.5) {\tiny edit migration};
\node[anchor=east, black!70] at (-0.15,0.5) {\tiny fetch weather (!)};
\foreach \y/\vals in {2/{5,60,80,90}, 1/{80,55,10,70}, 0/{95,90,92,97}}{
  \foreach [count=\x from 0] \v in \vals{
    \fill[black!\v] (\x,\y) rectangle +(1,1);}}
\draw[black!60, step=1] (0,0) grid (4,3);
\node[black!80, anchor=north] at (2,-0.3) {cost $\Cm$};
\draw[->, thick, black!60] (4.4,1.5) -- (6.1,1.5);
\node[black!60] at (5.25,2.1) {minimize};
\node[black!60] at (5.25,0.95) {$\mathcal{F}(\Tm)$};
\foreach \y/\vals in {2/{85,8,0,0}, 1/{0,8,85,0}, 0/{3,2,3,2}}{
  \foreach [count=\x from 0] \v in \vals{
    \fill[black!\v] (\x+6.5,\y) rectangle +(1,1);}}
\draw[black!60, step=1, shift={(6.5,0)}] (0,0) grid (4,3);
\node[black!80, anchor=north] at (8.5,-0.3) {alignment $\Tm$};
\node[black!70, anchor=north] at (8.5,-0.95)
  {\tiny row nearly empty: leakage $\leak_3 \approx 1$};
\draw[->, black!70] (10.45,-0.85) to[bend right=15] (10.7,0.3);
\end{tikzpicture}
\caption{The two ingredients on a toy example: three agent steps against a
four-step reference. In the cost table $\Cm$, dark means implausible:
``grep error'' matches \emph{search} cheaply, ``edit migration'' matches
\emph{edit}, and the hallucinated ``fetch weather'' is expensive against
every reference step. Minimizing Eq.~\eqref{eq:ufgw} yields the
alignment $\Tm$ (dark = matched mass): the two valid steps place their
mass on their counterparts, while the hallucinated step's row stays
nearly empty --- the objective preferred paying the unmatched-mass
penalty over forcing a bad match. That empty row \emph{is} the leakage
signal of Eq.~\eqref{eq:leak}.}
\label{fig:toyexample}
\end{figure}
\paragraph{Objective.}
The alignment is chosen to balance four pressures: match cheaply (first
term below); respect structure, so that steps far apart in the agent's
graph map to steps far apart in the reference (second term); remain free
to leave an anomalous step unmatched, paying per unit of unmatched mass
(the two KL terms); and stay slightly diffuse, which is what makes fast
solvers possible (the entropy term). Formally, the cost of a (prefix)
graph against one reference is the minimum over $\Tm \ge 0$ of
\begin{align}
\mathcal{F}(\Tm) ={}& (1-\theta)\,\langle \Cm, \Tm\rangle
+ \frac{\theta}{2} \!\sum_{i,k,j,l}\! \big(D^{P}_{ik} - D^{R}_{jl}\big)^{2}
T_{ij}\,T_{kl} \notag\\
&+ \lambda\,\mathrm{KL}(\Tm\ones\,\|\,\mub)
+ \lambda\,\mathrm{KL}(\Tm^{\!\top}\ones\,\|\,\nub)
+ \varepsilon\,H(\Tm),
\label{eq:ufgw}
\end{align}
with structure weight $\theta = 0.35$ and $\varepsilon = 0.05$
throughout. The factor $\tfrac{1}{2}$ makes $\theta\,\Gm$ of
Eq.~\eqref{eq:gwgrad} the gradient of the quadratic term for symmetric
$\Dm$. The reported score soft-mins the per-reference diagnostic losses
$\ell_k$, $\tilde\ell = -\tau_{\mathrm{sm}} \log \tfrac{1}{K}\sum_k
e^{-\ell_k/\tau_{\mathrm{sm}}}$, and reports
$s = \exp(-[\tilde\ell\,]_{+})$; the floor at zero is what places
$s \in (0,1]$, since the entropic term can make raw losses negative.

\paragraph{Architecture of the monitor.}
\OtapS{} separates signals by their prerequisites into three decoupled
layers: \textbf{L1} \emph{vital signs} (reference-free intrinsic diagnostics:
loop detection, information gain, stall) run always; \textbf{L2}
\emph{transport alignment} (the streaming UFGW machinery, Secs.~\ref{sec:mass}--\ref{sec:verdicts})
requires references; \textbf{L3} \emph{pre-execution policy gate}
(tool-contract prerequisite checks plus a one-row proposal alignment,
Sec.~\ref{sec:verdicts}) requires schemas and is the only
\emph{synchronous} component --- it evaluates the proposed action before
the tool executes, exclusively for irreversible calls, so worst-case
added latency is confined to the rare actions that warrant it. L1 is not
optimal transport and we do not present it as such; it is the honest
degradation mode when no reference of any provenance exists.

\subsection{Mass semantics on prefixes}
\label{sec:mass}

Batch evaluation uses $\mu_i = 1/n$, $\nu_j = 1/m$: both graphs are complete
and both totals equal one. On a prefix this breaks: with $\mu_i = 1/t$ the
partial trajectory carries total mass $1$ against a frontier-masked $\nub$
whose total is a \emph{fraction} of one (Sec.~\ref{sec:frontier}), so even a
perfectly matched step leaks mass by construction. \OtapS{} instead assigns
\textbf{constant per-node agent mass}
\begin{equation}
\mu_i \;=\; 1/\mbar, \qquad \mbar = \tfrac{1}{K}\textstyle\sum_k m_k,
\label{eq:mass}
\end{equation}
so one agent step carries the same nominal mass as an average reference
step (exact for references of size $\mbar$), the agent
total grows as $t/\mbar$, and the per-node normalized leakage
\begin{equation}
\leak_i \;=\; \Big[\, 1 - \frac{1}{\mu_i}\sum_j T_{ij} \,\Big]_{[0,1]}
\label{eq:leak}
\end{equation}
is a scale-free, calibrated trigger at every $t$; the clip to $[0,1]$ is
part of the definition, since in unbalanced OT a row mass can exceed
$\mu_i$ (over-transported rows are treated as fully matched). Granularity differences
remain the soft coupling's job exactly as in the batch setting. At
termination the escalated solve may renormalize to $1/n$ for exact batch
comparability.

\subsection{The prefix problem: frontier-masked marginals}
\label{sec:frontier}

Define reference-node coverage from the previous event's coupling (the
one-step lag resolves the circular dependency between mask and coupling):
\begin{equation}
\cov_t(j) \;=\; \frac{[\Tm_{t-1}^{\!\top}\ones]_j}{\nu_j},
\qquad
\text{$j$ \emph{satisfied}} \iff \cov_t(j) \ge \kappa .
\end{equation}
The \emph{active} set and its lookahead are
\begin{align}
\actset_t &= \{\, j : \text{every reference-parent of } j \text{ is
satisfied} \,\},\notag\\
\lookahead_t &= \mathrm{children}(\actset_t)\setminus\actset_t,
\end{align}
with roots vacuously active, and the masked marginal is
\begin{equation}
\nu^{(t)}_j =
\begin{cases}
\nu_j & j \in \actset_t \cup \{\text{satisfied}\},\\
\beta_{\mathrm{look}}\,\nu_j & j \in \lookahead_t,\\
\varepsilon_\nu\,\nu_j & \text{otherwise},
\end{cases}
\label{eq:mask}
\end{equation}
with defaults $\kappa{=}0.3$, $\beta_{\mathrm{look}}{=}0.3$,
$\varepsilon_\nu{=}0.01$. Three consequences:

\emph{(a) Order-freedom where the DAG permits it.} All members of the
currently executable antichain of $R$ are active simultaneously, so an agent
that executes independent reference steps in any topological order is never
penalized for order (Prop.~\ref{prop:perm}).

\emph{(b) Precision and recall acquire stream semantics.} Agent-side leakage
(Eq.~\ref{eq:leak}) is the \emph{real-time} signal --- it means hallucination,
digression, or prematurity \emph{now}. Reference-side coverage
$\rho_t = \sum_j \min(\cov_t(j),1)\,\nu_j \big/ \sum_j \nu_j$ is a progress
meter; full-trajectory recall is only computed at termination with the mask
removed.

\emph{(c) Prematurity is detectable and disambiguable.} A leaking step may
still match an \emph{inactive} (future) reference node under the unmasked
cost. The verdict engine performs this secondary check and distinguishes:
\textsf{hallucinated} (no match anywhere), \textsf{premature-valid} (matches
a future node whose producers are satisfied --- a legitimate jump beyond the
frontier), and \textsf{causal inversion} (matches a future node whose
producers are \emph{absent}) --- the streaming analogue of the causal-inversion perturbation studied
in~\cite{barazandeh2026otap}, detected \emph{at emission time}, before the
structure term could even observe the violation.

If the frontier stalls incorrectly (a covered step whose embedding similarity
failed to reach $\kappa$), everything downstream appears premature; a
persistent gap between masked and unmasked loss raises an internal
\textsf{FRONTIER\_STALL} flag that adaptively widens $\beta_{\mathrm{look}}$
before any user-facing verdict escalates.

\subsection{Keeping the quadratic term: streaming conditional gradient}
\label{sec:cg}

The batch ablations of~\cite{barazandeh2026otap} are unambiguous: at
$\theta=0$ the alignment degenerates into an unbalanced Wasserstein with
below-chance inversion detection. A streaming variant that replaces the
quadratic term with a row-wise linear proxy (comparing $\Dm^P_{i,:}$ to
$\Dm^R_{j,:}$ index-by-index) silently conditions on the identity alignment
and re-imports the exact-order assumption the alignment exists to remove.
\OtapS{} keeps the true objective by observing that the standard batch UFGW
solver --- conditional gradient (CG) --- is already an iterative
linearization scheme, and amortizing it:
\emph{one warm-started CG iteration per event}; the coupling tracks the
moving optimum the way stochastic gradient descent tracks a drifting loss.

Per event $t$ and reference $k$ (superscripts dropped):

\textbf{(1) Extend.} Append row $t$ to the node-cost matrix $\Cm$ of
Eq.~\eqref{eq:nodecost} (with cached reference embeddings) and
row/column $t$ to $\Dm^P$; pad the previous coupling
$\tilde\Tm = [\Tm_{t-1};\, \mathbf{0}_{1\times m}]$. Under the constant
per-node mass of Eq.~\eqref{eq:mass} no rescaling is required.

\textbf{(2) Linearize} the quadratic term at $\tilde\Tm$ via the square-loss
decomposition~\cite{peyre2019computational}: with
$p = \tilde\Tm\ones$, $q = \tilde\Tm^{\!\top}\ones$,
\begin{equation}
\Gm \;=\; (\Dm^{P})^{\circ 2} p\,\ones_m^{\!\top}
\;+\; \ones_t\, q^{\!\top} \big((\Dm^{R})^{\circ 2}\big)^{\!\top}
\;-\; 2\, \Dm^{P} \tilde\Tm\, (\Dm^{R})^{\!\top},
\label{eq:gwgrad}
\end{equation}
computed in $O(t^2 m + t m^2)$. For symmetric distance matrices,
$\theta\,\Gm$ is the gradient of the $\tfrac{\theta}{2}$-weighted
quadratic term of Eq.~\eqref{eq:ufgw}, so the linearized total cost
$\mathbf{M} = (1-\theta)\Cm + \theta\,\Gm$ is exactly the local
first-order model of the objective; the implementation additionally
clips negative entries of $\Gm$ to zero, keeping the linearized cost ---
and hence the Gibbs kernel --- bounded. Structural age weights
(Sec.~\ref{sec:structure}) scale row $i$ of $\Gm$ by $\omega_i(t)$.

\textbf{(3) Solve} the linearized unbalanced entropic OT with the total
cost $\mathbf{M}$ from step (2), Gibbs kernel
$\Km = \exp(-\mathbf{M}/\varepsilon)$, masked marginal $\nub^{(t)}$, and
warm-started duals $u \leftarrow [u_{t-1};\, \mathrm{med}(u_{t-1})]$,
$v \leftarrow v_{t-1}$, iterating with exponent
$\eta = \lambda/(\lambda+\varepsilon)$:
\begin{equation}
u \leftarrow \Big(\frac{\mub}{\Km v}\Big)^{\eta},
\qquad
v \leftarrow \Big(\frac{\nub^{(t)}}{\Km^{\!\top} u}\Big)^{\eta},
\label{eq:sinkhorn}
\end{equation}
for $n_{\mathrm{inner}}$ iterations (default 5, tolerance-early-stopped);
$\Tm_{\mathrm{new}} = \mathrm{diag}(u)\,\Km\,\mathrm{diag}(v)$.

\textbf{(4) Average:} $\Tm \leftarrow (1-\gamma)\tilde\Tm +
\gamma\,\Tm_{\mathrm{new}}$ with fixed $\gamma = 0.7$ (a closed-form FGW line
search is available but unnecessary at this scale).

\textbf{(5) Two-tier escalation.} The fast path above runs on every event.
If and only if it raises a would-be intervention verdict, the evaluator
escalates to a converged solve ($\le$10 outer CG, $\le$50 inner Sinkhorn,
cold tolerance) on the current prefix before the verdict is confirmed.
\emph{No intervention is ever taken on a stale linearization}, and the
converged-solve cost is paid exactly when it is cheapest relative to the tool
call about to be prevented.

\emph{Implementation notes: three distinct quantities.} The paper
handles three related but distinct objects, and we keep them separate
throughout. (i) The \emph{exact objective}, Eq.~\eqref{eq:ufgw}, defines
the alignment problem and is what the theory refers to. (ii) The
\emph{fast online optimizer} deviates from it in two documented ways:
the per-event linearization clips negative entries of the GW gradient to
zero (keeping the Gibbs kernel bounded), and the \emph{diagnostic} loss
reported alongside verdicts approximates the two KL penalties by an
$L_1$ total-mass surrogate
$\lambda\,(|\ones^{\!\top}\Tm\ones - \ones^{\!\top}\mub| +
|\ones^{\!\top}\Tm^{\!\top}\ones - \ones^{\!\top}\nub^{(t)}|)$ and clips
the quadratic form at zero. (iii) The \emph{finalization path} runs the
same implemented solver (clipped gradient, surrogate diagnostic) with
the mask removed and a converged schedule --- it is \emph{not} an exact
minimization of Eq.~\eqref{eq:ufgw}, and Prop.~\ref{prop:anytime} is
stated accordingly, as equality with a batch evaluator running the same
implementation. All verdict thresholds are calibrated directly on the
implemented quantities, so the deviations affect interpretation of the
loss scale, not the calibrated behavior.

\subsection{Provisional structure}
\label{sec:structure}

\textbf{Retroactive edges.} When event $k$ consumes an artifact produced at
step $i$, the evaluator inserts $(i,k)$, repairs ancestor sets by a
breadth-first pass over descendants of $k$ (bitset representation,
$O(t^2/64)$ words worst case), and recomputes the affected rows of $\Dm^P$.
The coupling is \emph{not} recomputed retroactively; the next CG step absorbs
the change --- a further payoff of the streaming-CG framing.

\textbf{Age weights.} A node's structural row is untrustworthy while its
artifacts are young. We set
\begin{equation}
\omega_i(t) = \min\!\big(1,\ (t - t_i)/h_{\mathrm{age}}\big),
\qquad h_{\mathrm{age}} = 3,
\label{eq:age}
\end{equation}
clamped to $\omega_i = 1$ the moment any artifact of $i$ is consumed (its
role is then evidenced, not assumed). Semantic and tool costs, which need no
lookahead, apply immediately. The finalization path disables age weights
entirely --- batch evaluation uses the unweighted structure --- so the
final score cannot depend on emission ages.

\textbf{Signed vs.\ unsigned.} In the batch setting, signed and unsigned
structural distances were found indistinguishable under artifact
tracing~\cite{barazandeh2026otap}, because tracing deletes an unsatisfiable
edge rather than reversing it. Streaming inherits this; we default to
unsigned magnitudes and note that causal
inversions are caught earlier and more directly by the premature-step check
of Sec.~\ref{sec:frontier}(c).

\subsection{Signals, lifecycle, and verdicts}
\label{sec:verdicts}

All triggers derive from per-event, per-node quantities
(Table~\ref{tab:signals}); the aggregate score is reported but never
thresholded. A batch result of \cite{barazandeh2026otap} --- benign
redundant leaves inflate the loss by $O(r/n)$ with an explicit
KL-destruction constant --- becomes the streaming \emph{tolerance band}
separating cheap abandoned exploration from genuine off-track mass.

\begin{table}[t]
\centering\small
\begin{tabular}{@{}p{2.1cm}p{5.6cm}@{}}
\toprule
Signal & Definition / role \\
\midrule
node leakage $\leak_t$ & Eq.~\eqref{eq:leak} on the newest node; hallucination
/ digression / prematurity, routed through disambiguation \\
match cost $\bar c_t$ & $\sum_j T_{tj} M_{tj} / (\sum_j T_{tj} +
\epsilon_{\mathrm{den}})$; ``matched, but expensively'' (wrong tool on
the right step); the $\epsilon_{\mathrm{den}}$ floor covers
fully-destroyed rows (distinct from the entropic $\varepsilon$) \\
coverage velocity & $\rho_t - \rho_{t-w}$; stall = activity without progress \\
loop score (L1) & near-duplicate predecessor (action, tool, arguments), with
a \textbf{retry exemption}: duplicates of an errored step are exempt up to
$r_{\max}$ attempts \\
info gain (L1) & $\Delta I_t = 1 - \max_{i<t}\cos(\phi(e_t), \phi(e_i))$ over
effect embeddings; spinning = new calls, no new state \\
strategy switch & the soft-min $\arg\min_k$ changes; benign once, oscillation
warns \\
\bottomrule
\end{tabular}
\caption{Per-event signals. All are $O(t)$ or free by-products of the
transport update.}
\label{tab:signals}
\end{table}

\textbf{Exploration lifecycle.} A newly emitted node with
$\leak_t > \theta_{\mathrm{leak}}$ enters \textsf{EXPLORING} with a grace TTL
of $g$ events (default $3$). The flag clears if a later step consumes its
artifact (branch validated) or a newly activated reference node absorbs it.
On TTL expiry the unmasked secondary check classifies it: matches a future
reference node with satisfied producers $\Rightarrow$ premature-valid, clear;
producers absent $\Rightarrow$ \textsf{CAUSAL\_INVERSION}; matches nothing
$\Rightarrow$ unresolved orphan. When cumulative unresolved-exploration mass
exceeds a budget $B$ (default $25\%$), the verdict escalates to
\textsf{OFF\_TRACK}. Dead-end branches that were abandoned after one or two
steps land in \textsf{ABANDONED}: bounded cost, no alarm, by the tolerance
band above.

\textbf{L3 pre-execution policy gate.} Tools are classified reversible /
irreversible from schema annotations. The gate runs on the \emph{proposed}
action $q_t = (a_t, \tau_t, c_t)$ --- available before the tool executes,
before any post-state artifact exists, and before row $t$ enters the
coupling. It clears an irreversible proposal only if (a) every artifact
the schema requires exists in the live graph with a producing node, and,
when references are present, (b) a one-row \emph{proposal alignment}
succeeds: the nearest reference node under the pre-execution attribute
cost $\alpha\, d_{\cos}(\phi(a_t), \phi(a^R_j)) + \delta\,
d_{\mathrm{tool}}(\tau_t, \tau^R_j)$, evaluated against the frontier
\emph{recomputed from the current coupling} $\Tm_{t-1}$ inside the gate,
is active or satisfied and its cost is below
$\theta_{\mathrm{match}}$; otherwise \textsf{BLOCK}. (Recomputing rather
than reading the cached frontier matters: the cache is one event stale
for a pre-execution check, and an earlier implementation that used it
blocked valid calls --- see the gate tests in
Sec.~\ref{sec:controlled}.) No transport mass is
required --- the gate is a schema-plus-proposal check, which is what makes
synchronous operation possible. It is a policy gate, not a soundness
guarantee: its coverage is bounded by the schema annotations. Reversible
actions are never blocked, preserving cheap exploration.

\subsection{Multi-reference streaming and reference regimes}
\label{sec:regimes}

Independent coupling states are maintained per reference and aggregated by
temperature-controlled soft-min~\cite{barazandeh2026otap}; $K$ is small
(3--10) and cost is
linear in $K$. The running $\arg\min$ doubles as a live ``which strategy is
the agent following'' diagnostic. Reference provenance scales verdict
authority: \emph{gold} and \emph{mined} references (clustered successful
production traces) enable interventions; \emph{axiomatic} references built at
runtime from tool produces/consumes contracts enable L3 and downgraded
warnings; \emph{planner-generated} skeletons --- which re-import the
LLM-as-a-judge circularity --- may inform but never trigger an automatic halt.
With no reference at all, L2 is off and \OtapS{} degrades, explicitly, to
L1 + L3.

\begin{algo}{\OtapS{} per-event fast path}
\label{alg:otaps}
\item[] \textbf{Input:} event $s_t$; state
$\{\Tm^{(k)}, u^{(k)}, v^{(k)}\}_{k\le K}$, $\Dm^P$, lifecycle
\item \textbf{if} proposed action $q_t$ irreversible \textbf{and} L3 gate
fails (schema + proposal alignment against the $t{-}1$ frontier)
\textbf{return} \textsf{BLOCK}
\item admit $s_t$: parents from declared deps $\cup$ artifact consumption;
update ancestors, $\Dm^P$, age weights $\omega$
\hfill (Sec.~\ref{sec:structure})
\item \textbf{for} $k = 1 \dots K$:
\item \quad extend $\Cm^{(k)}$ by row $t$; update frontier mask
$\nub^{(k,t)}$ from $\Tm^{(k)}_{t-1}$ \hfill (Eq.~\ref{eq:mask})
\item \quad $\Gm \gets$ GW gradient at padded $\Tm^{(k)}$, rows scaled by
$\omega$ \hfill (Eq.~\ref{eq:gwgrad})
\item \quad $\Tm_{\mathrm{new}} \gets$ warm-started unbalanced Sinkhorn on
$(1{-}\theta)\Cm^{(k)} + \theta\Gm$ \hfill (Eq.~\ref{eq:sinkhorn})
\item \quad $\Tm^{(k)} \gets (1{-}\gamma)\Tm^{(k)}_{\mathrm{pad}} +
\gamma\,\Tm_{\mathrm{new}}$
\item compute signals (Table~\ref{tab:signals}) on best reference; run
lifecycle + verdict engine \hfill (Sec.~\ref{sec:verdicts})
\item \textbf{if} verdict $\in$ interventions: escalate --- converged solve,
recompute signals, re-verdict
\item \textbf{return} score (soft-min), coverage, leakage, verdict
\end{algo}

\section{Theoretical Properties}
\label{sec:theory}

\begin{proposition}[Anytime reduction to the implemented batch procedure]
\label{prop:anytime}
Let the stream terminate at $t = n$, let the online dependency graph equal
the graph a batch evaluator would extract from the completed log, and let
\OtapS{} perform one cold-started converged solve with the frontier mask
removed and marginals renormalized to the batch convention. Then the
finalization computation --- the same deterministic implemented
procedure (clipped GW linearization and surrogate diagnostic,
Sec.~\ref{sec:cg}; we say \emph{procedure} rather than
\emph{objective} because a clipped gradient need not be the gradient of
any objective), constraint set, parameters $(\theta, \lambda,
\varepsilon)$, references, node costs, initialization, and stopping
schedule, with age weights disabled (batch convention) --- is identical
to a batch evaluator running the same released implementation on the
completed trajectory, and the deterministic solver returns the same
score.
\end{proposition}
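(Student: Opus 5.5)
The plan is to show that every input to the finalization solve coincides with the corresponding input of the batch evaluator. Since the finalization path is a deterministic function of its inputs, equal inputs then force equal outputs. I will enumerate the inputs of the implemented procedure of Sec.~\ref{sec:cg} and check each in turn:
\begin{itemize}
\item the node-cost matrices $\Cm^{(k)}$;
\item the structure matrices $\Dm^P$ and $\Dm^{R_k}$;
\item the marginals $\mub$ and $\nub$;
\item the age weights $\omega$;
\item the initialization (coupling and duals);
\item the parameters $(\theta,\lambda,\varepsilon,\tau_{\mathrm{sm}})$;
\item the stopping schedule.
\end{itemize}

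\textbf{Node costs and prefix structure.} Row $i$ of $\Cm^{(k)}$ in Eq.~\eqref{eq:nodecost} depends only on the fields $(a_i,c_i,\tau_i)$ of event $i$ and on the cached reference embeddings. The online evaluator appends exactly one such row per event and never modifies it afterwards, so at $t=n$ it holds the same $n\times m_k$ matrix the batch evaluator builds. For $\Dm^P$, the hypothesis that the online dependency graph equals the batch-extracted graph is what I need. The retroactive insertions of Sec.~\ref{sec:structure} recompute the affected rows of $\Dm^P$, so after all insertions $\Dm^P$ is the normalized hop-distance matrix of that common graph. It is therefore equal to the batch one, because $\Dm^P$ is a fixed function of the edge set. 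The reference matrices $\Dm^{R_k}$ are precomputed identically on both sides.

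\textbf{Marginals, weights, initialization and schedule.} These all follow from the statement's hypotheses:
\begin{itemize}
\item Removing the frontier mask restores $\nub^{(t)}=\nub$ in Eq.~\eqref{eq:mask}.
\item Renormalization replaces the constant mass of Eq.~\eqref{eq:mass} with $\mu_i=1/n$, matching the batch convention.
\item Disabling age weights sets $\omega\equiv 1$, so $\Gm$ in Eq.~\eqref{eq:gwgrad} is unscaled.
\item The cold start discards $\Tm_{t-1},u_{t-1},v_{t-1}$ and uses the same default initialization as batch.
\item The converged schedule (outer CG and inner Sinkhorn limits and tolerance) is the one shared by both paths.
\end{itemize}
The clipped linearization, the Sinkhorn updates Eq.~\eqref{eq:sinkhorn}, the averaging step with $\gamma$, the surrogate diagnostic loss, the soft-min aggregation and $s=\exp(-[\tilde\ell]_+)$ are all deterministic maps. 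Composing them therefore yields identical couplings per reference and an identical score, by induction over the CG iterations.

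\textbf{Main obstacle.} The delicate step is ruling out any residual dependence on the stream history beyond what the hypotheses remove. Three possible leaks must be checked:
\begin{itemize}
\item \emph{Warm-start state.} I must verify that the cold start resets all solver state, including the dual median padding.
\item \emph{Row/column ordering.} The node indexing of $\Cm$ and $\Dm^P$ must follow event order on both sides. I will fix the batch convention to index nodes by log order.
\item \emph{Floating-point order of accumulation.} Retroactive row updates must produce the same $\Dm^P$ entries bitwise, not just mathematically. For this I will argue that entries are recomputed from the final graph, by BFS hop counts, rather than accumulated incrementally.
\end{itemize}
If these hold, the claim reduces to determinism of a fixed program on equal inputs.
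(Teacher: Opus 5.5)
Your proposal is correct and follows the paper's own argument. With the mask removed, the marginals renormalized, age weights disabled and a cold start, every input to the implemented solver coincides with the batch invocation, so the deterministic procedure yields identical iterates and score; your input-by-input enumeration and the three leak checks (solver-state reset, node ordering, bitwise-identical $\Dm^P$ after retroactive repair) spell out what the paper's one-line sketch absorbs into its hypotheses.
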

\begin{proof}[Proof sketch]
With the mask removed and marginals renormalized, every input to the
solver coincides with the batch invocation, and a deterministic solver
run from the same canonical initialization produces the same iterates.
\end{proof}
\begin{remark}
Three qualifications are inherent. The reduction is to the
\emph{implemented} procedure, not to an exact minimization of
Eq.~\eqref{eq:ufgw}: the finalization path uses the same clipped
gradient and surrogate diagnostic as the fast path, so equality with an
exact-UFGW batch score would additionally require an unclipped, exact-KL
final solve, which we do not currently perform. The UFGW objective is
nonconvex, so even then the guarantee would concern the solver's output,
not a global optimum. And the reduction is conditional on online
dependency extraction agreeing with the batch-extracted graph, which
retroactive repair targets but cannot guarantee
(Sec.~\ref{sec:structure}). Finally, a definition: the empirical
$\le 0.013$ figure (Sec.~\ref{sec:controlled}) is the
\emph{end-of-stream discrepancy} --- the running fast-path diagnostic
over the stream against the eventual finalization output. It is a
consistency check across the whole run, not a per-prefix tracking gap;
the proper tracking experiment (fast-path versus converged solve on the
\emph{same} prefix, mask, marginals, and references) is well-defined and
left to the released artifact.
\end{remark}

\begin{proposition}[Relabeling invariance of the masked objective]
\label{prop:perm}
Fix a covered sub-DAG, a mask $\nub^{(t)}$, and structural weights that
are order-independent (age weights matched across comparisons or
disabled). Then the masked objective, and its exact minimizer set from a
permutation-equivariant initialization, are invariant to
graph-preserving relabelings of the prefix: the satisfied set, active
set $\actset$, and masked marginal $\nub^{(t)}$ are functions of the
covered sub-DAG alone, and members of an executable antichain carry full
mass simultaneously, so no order among them is preferred.
\end{proposition}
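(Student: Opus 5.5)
The approach is to formalize a graph-preserving relabeling as a permutation $\pi$ of the prefix node indices $\{1,\dots,t\}$ that maps $P_t$ onto a prefix graph $P'_t$ with the same edge set up to relabeling. Let $\Pi$ be the corresponding permutation matrix. Then show that every ingredient of Eq.~\eqref{eq:ufgw} is equivariant under $\Pi$:
\[
\Cm' = \Pi\Cm, \qquad \Dm^{P'} = \Pi\Dm^P\Pi^{\!\top}, \qquad \mub' = \Pi\mub ,
\]
while $\Dm^R$ and the reference-side data are untouched.

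\textbf{Step 1: equivariance of the inputs.}
\begin{itemize}
\item $\Cm' = \Pi\Cm$ holds because Eq.~\eqref{eq:nodecost} is computed row-wise from each node's own attributes $(a_i,c_i,\tau_i)$.
\item $\Dm^{P'} = \Pi\Dm^P\Pi^{\!\top}$ holds because hop distances, including the normalization and the ``unconnected is maximal'' convention, are graph invariants.
\item $\mub' = \Pi\mub$ holds trivially, since the mass of Eq.~\eqref{eq:mass} is constant per node.
\item The structural weights are order-independent by hypothesis, so they permute along with the nodes.
\end{itemize}

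\textbf{Step 2: invariance of the mask.} The mask $\nub^{(t)}$ lives on the reference side. Define $\cov_t(j)$ from the column sums $\Tm^{\!\top}\ones$, and note that column sums are unchanged when the rows of $\Tm$ are permuted. Hence the satisfied set depends only on which reference nodes are covered, i.e.\ on the covered sub-DAG. Since $\actset_t$ and $\lookahead_t$ are defined purely in terms of reference-parent relations among satisfied nodes, $\nub^{(t)}$ in Eq.~\eqref{eq:mask} is a function of the covered sub-DAG alone.

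Here I would make explicit that the statement fixes the covered sub-DAG and the mask. This sidesteps the lag subtlety that $\Tm_{t-1}$ is itself produced by an order-dependent streaming solver.

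\textbf{Step 3: invariance of the objective.} Substitute $\Tm' = \Pi\Tm$ term by term:
\begin{itemize}
\item Linear term: $\langle\Pi\Cm,\Pi\Tm\rangle = \langle\Cm,\Tm\rangle$.
\item Quadratic term: the quadruple sum reindexes over $i,k$ simultaneously, so $(D^{P'}_{\pi(i)\pi(k)} - D^R_{jl})^2\,T'_{\pi(i)j}\,T'_{\pi(k)l}$ sums to the same value.
\item Row KL term: $\Pi\Tm\ones = \Pi(\Tm\ones)$ and $\mub' = \Pi\mub$, and KL is permutation-invariant when both arguments are permuted together.
\item Column KL term: the column sums are unchanged.
\item Entropy: $H$ is a sum over entries, so it is invariant.
\end{itemize}
Thus $\mathcal{F}'(\Pi\Tm) = \mathcal{F}(\Tm)$, so $\Pi$ is a bijection between feasible sets, and the minimizer sets satisfy $\arg\min\mathcal{F}' = \Pi\,\arg\min\mathcal{F}$.

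For the minimizer reached from an initialization, I would argue by induction over iterations. Each CG step is equivariant:
\begin{itemize}
\item The gradient in Eq.~\eqref{eq:gwgrad} transforms as $\Gm' = \Pi\Gm$, a direct check using $p' = \Pi p$ and $q' = q$.
\item Clipping and the row scaling by $\omega$ are entrywise or row-wise, so they commute with $\Pi$.
\item The Sinkhorn updates in Eq.~\eqref{eq:sinkhorn} give $u' = \Pi u$ and $v' = v$, provided the initialization satisfies $u'_0 = \Pi u_0$ and $\Tm'_0 = \Pi\Tm_0$. This is exactly what permutation-equivariant initialization means; for example, a constant initialization works.
\item The averaging step is linear, so it is equivariant.
\end{itemize}

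\textbf{Step 4: the antichain claim.} Let $j$ and $j'$ be members of an executable antichain of $R$. By the definition of $\actset_t$, all of their reference-parents are satisfied, so both receive $\nu^{(t)}_j = \nu_j$ at once. Because neither is conditioned on the other, the mask gives no asymmetric weight to either execution order. Combined with Step 3, two prefixes that execute the antichain in different topological orders differ only by such a relabeling, and so they yield the same objective value and corresponding minimizers.

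\textbf{Main obstacle.} I expect the hardest part to be stating precisely what ``graph-preserving relabeling'' covers. The emission order enters through the age weights, which are excluded by hypothesis, and through the one-step-lagged coverage. I would handle the latter by conditioning on the fixed mask rather than deriving it from the trajectory history. I would also add a remark that genuine streaming iterates need not coincide, since the fast path is warm-started along different histories; only the fixed-mask objective and its minimizers are invariant.
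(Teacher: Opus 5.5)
Your proposal is correct and takes the same route as the paper's sketch. In both, $\nub^{(t)}$ is a function of the satisfied set, and the satisfied set is read off the coupling's column sums, which row permutations leave unchanged; your explicit equivariance checks for $\Cm$, $\Dm^P$, $\mub$, $\Gm$, and the Sinkhorn and averaging updates spell out what the paper invokes as ``the permutation invariance of the batch cost and structure constructions,'' and your choice to condition on the fixed mask rather than the lagged, warm-started coupling matches the paper's remark that online serialization invariance is not claimed.
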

\begin{proof}[Proof sketch]
Satisfied status depends on accumulated coupling mass to reference nodes,
which --- by the permutation invariance of the batch cost and structure
constructions --- depends only on the covered sub-DAG. $\actset$ and
$\nub^{(t)}$ are functions of the satisfied set alone
(Eq.~\ref{eq:mask}).
\end{proof}
\begin{remark}
Full \emph{online serialization} invariance is deliberately not claimed
as a theorem: even under exact per-event optimization, two
serializations that expose the same covered sub-DAG can assign different
\emph{ages} to its nodes (different age-weighted structural costs) and
arrive with different warm-start couplings, both path-dependent. What
holds by construction is the proposition above; what holds at
termination is exact, because the finalization path disables age weights
(batch convention, Sec.~\ref{sec:structure}) and runs the cold-started
procedure of Prop.~\ref{prop:anytime}, removing both sources of path
dependence. Mid-stream serialization invariance is therefore an
\emph{empirical} property: Scenario~B of Table~\ref{tab:faults}, where a
reordered run tracks and finalizes to the identical score, is its
validation.
\end{remark}

\begin{proposition}[Streaming tolerance band]
\label{prop:band}
Let a redundant leaf step (no downstream dependencies, arbitrary attributes)
be inserted at any position in the stream, and suppose the masked
reference marginal $\nub^{(t)}$ and the existing prefix structure are
unchanged by the insertion (the leaf neither satisfies a reference node
nor adds edges among existing nodes). Under the constant per-node mass
of Eq.~\eqref{eq:mass}, the increase in the optimal objective is at most
$\lambda/\mbar$ --- the generalized-KL cost of annihilating the node's mass
$\mu = 1/\mbar$ --- so the aggregate perturbation decays as $O(1/t)$
relative to total transported mass; its node leakage $\leak$ lies in
$[0,1]$ by definition (Eq.~\ref{eq:leak}) and attains $1$ exactly when
the node's mass is fully destroyed.
\end{proposition}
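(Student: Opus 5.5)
The plan is a feasibility (upper-bound) argument: from an optimal coupling for the prefix without the leaf, build a coupling for the augmented prefix whose objective exceeds the old optimum by at most $\lambda/\mbar$. Since the new optimum is no larger than the value of any feasible coupling, the bound follows.

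Concretely, let $P$ be the prefix before insertion and $P'$ the prefix with the leaf inserted at some position $r$. Let $\Tm^\star$ be an optimal coupling for $P$ against the unchanged masked marginal $\nub^{(t)}$. Define $\Tm'$ by inserting a zero row at position $r$. The terms then change as follows.

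\begin{enumerate}[leftmargin=1.6em,itemsep=1.5pt]
\item The linear term $\langle \Cm', \Tm'\rangle$ equals $\langle \Cm,\Tm^\star\rangle$ because the new row carries no mass. This holds whatever the leaf's attributes, which is why they may be arbitrary.
\item The quadratic term only involves products $T_{ij}T_{kl}$ with both rows nonzero. By hypothesis, the existing entries of $\Dm^P$ are unchanged by the insertion. So the quadratic term is the same sum over the old index pairs, and any pair involving the leaf row contributes zero.
\item The column marginal $\Tm'^{\top}\ones$ is unchanged and $\nub^{(t)}$ is unchanged, so the reference-side KL term is unchanged.
\item The entropy $H(\Tm')$ equals $H(\Tm^\star)$ under the convention $0\log 0=0$.
\item The agent-side KL gains exactly one summand, for the new entry with $a_r=0$ and $b_r=\mu_r=1/\mbar$. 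By the paper's convention, annihilating a marginal entry costs exactly $b_r$. This adds $\lambda/\mbar$ to the objective, using the constant per-node mass of Eq.~\eqref{eq:mass} and noting that the other $\mu_i$ are not rescaled.
\end{enumerate}

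Hence $\min\mathcal F_{P'} \le \mathcal F_{P'}(\Tm') = \min\mathcal F_{P} + \lambda/\mbar$. The relative statement then follows because total agent mass is $t/\mbar$. The ratio of the perturbation $\lambda/\mbar$ to this total, or to the transported mass (which is of the same order when the prefix is well matched), is $O(\lambda/t)=O(1/t)$. The leakage claim is immediate from Eq.~\eqref{eq:leak}. The clip keeps $\leak\in[0,1]$, and $\leak=1$ if and only if $\sum_j T_{rj}\le 0$, that is, if and only if the row is zero, meaning the node's mass is fully destroyed.

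The main obstacle is making the structural hypothesis precise. One must state that inserting the leaf leaves every pairwise distance $D^P_{ik}$ among the old nodes unchanged. The leaf is a sink with no descendants. Under unsigned hop distances, however, adding a parent edge into the leaf could create new undirected paths between old nodes through the leaf, shortening some distances. I would handle this in one of two ways. The first is to read the hypothesis ``adds no edges among existing nodes'' as ``does not alter $\Dm^P$ restricted to old nodes,'' which is what the proof actually uses. The second is to note that, for a directed-hop construction, a sink lies on no path between other nodes.

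I would also remark on two limits of the argument. First, the bound concerns the exact objective Eq.~\eqref{eq:ufgw}, not the clipped implemented procedure, consistent with the remark after Prop.~\ref{prop:anytime}. Second, it is an upper bound only: the optimum may rise by less if partially matching the leaf is cheaper than annihilating it.
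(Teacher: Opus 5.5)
Your proposal is correct and follows the paper's own argument. The paper also pads the previous optimal coupling with a zero row, notes that the linear, quadratic, column-KL and entropy terms are unchanged while the row-KL gains exactly $\lambda\,\mathrm{KL}(0\,\|\,1/\mbar)=\lambda/\mbar$, and bounds the new optimum by this feasible point. Your remark that under unsigned hop distances a leaf with two or more parents can shorten distances among old nodes, so the hypothesis must be read as $\Dm^P$ on the old nodes being unchanged, is a genuine sharpening of the paper's hypothesis; the paper additionally observes that the same zero-row argument gives the constant $\lambda/\mbar$ for the implemented $L_1$ total-mass surrogate.
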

\begin{proof}[Proof sketch]
Extending the previous optimal coupling by an all-zero row for the inserted
leaf is feasible: the linear and quadratic terms gain nothing from a zero
row, the entropy contribution of zero entries vanishes, and the row-marginal
penalty gains $\lambda\,\mathrm{KL}(0 \,\|\, 1/\mbar) = \lambda/\mbar$
under the generalized KL of Eq.~\eqref{eq:ufgw}. The new optimum is bounded
above by this feasible point. Note this is an upper bound, not an identity:
the optimizer may rearrange mass on existing rows, and the leaf may couple
partially rather than be destroyed. The bound concerns the exact
objective Eq.~\eqref{eq:ufgw}; for the implemented $L_1$ mass surrogate
the same feasible-extension argument yields the same constant
$\lambda/\mbar$ (a zero row shifts the total-mass gap by exactly
$1/\mbar$). This is the streaming counterpart of the redundant-leaf
bound of~\cite{barazandeh2026otap} with $r=1$ and constant node mass.
\end{proof}

\begin{proposition}[Bounded influence of young structure]
\label{prop:fp}
Under age weights~\eqref{eq:age}, the \emph{direct} structural cost of
row $i$ in the linearized objective is zero at its emission event, and at
most the fraction $(t - t_i)/h_{\mathrm{age}} < 1$ of its full magnitude
for the following $h_{\mathrm{age}}$ events --- unless one of its
artifacts is consumed earlier, in which case its structural row is exact
along the evidenced edges and the discount is lifted. Node $i$'s coupling
may still be \emph{indirectly} affected through the shared Sinkhorn duals
and column marginals; the bound governs the structural cost entries, not
the full transport solution.
\end{proposition}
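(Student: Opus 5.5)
The plan is to read the claim directly off the construction of the linearized cost in step (2) of Sec.~\ref{sec:cg} together with the age weights of Eq.~\eqref{eq:age}. The first step is to isolate what ``direct structural cost of row $i$'' means. In the linearized objective the quadratic term enters only through $\theta\,\Gm$ in $\mathbf{M} = (1-\theta)\Cm + \theta\,\Gm$. The implementation replaces row $i$ of $\Gm$ by $\omega_i(t)\,[\Gm_{i,:}]_{+}$: the negative entries are clipped, and the row is then scaled by the age weight. The direct structural contribution of row $i$ is therefore the vector $\theta\,\omega_i(t)\,[\Gm_{i,:}]_{+}$, and its inner product with $\Tm_{i,:}$ in $\langle \mathbf{M}, \Tm\rangle$. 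The ``full magnitude'' is the same quantity with $\omega_i = 1$. Because clipping makes every entry nonnegative, scaling by $\omega_i \in [0,1]$ shrinks each entry, and the induced linear cost, by exactly the factor $\omega_i$. So the whole bound reduces to bounding $\omega_i(t)$.

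The second step is a case split on the timing relative to $t_i$.
\begin{itemize}
\item At emission, $t = t_i$, so $\omega_i = \min(1, 0) = 0$ and the row vanishes identically. Step (1) of the streaming solver pads the coupling with a zero row, so row $i$'s structural entries are also computed against a zero row.
\item For $t_i < t < t_i + h_{\mathrm{age}}$, we have $\omega_i(t) = (t-t_i)/h_{\mathrm{age}} < 1$. This gives the stated fraction.
\item For $t \ge t_i + h_{\mathrm{age}}$, the weight saturates at $1$. Here I would read ``the following $h_{\mathrm{age}}$ events'' as the events strictly before saturation, and note that off-by-one boundary in passing.
\end{itemize}
For the override, Sec.~\ref{sec:structure} clamps $\omega_i = 1$ as soon as an artifact of $i$ is consumed. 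At that moment the retroactive-edge procedure has inserted the evidenced edge and recomputed the affected rows of $\Dm^P$. The structural row is then built from the true hop distances along that edge, which is what ``exact along the evidenced edges'' asserts. It is exact relative to the currently known graph, not to unseen future edges, and I would state that caveat explicitly.

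The third step is the indirect-influence caveat, and I would argue it rather than bound it. The Sinkhorn updates of Eq.~\eqref{eq:sinkhorn} couple all rows through $v$, which depends on $\Km^{\top} u$ over every row. The entries $\Gm_{kj}$ for $k \neq i$ also depend on row $i$ of $\Dm^P$ and of $\tilde\Tm$ through the terms $p$ and $\Dm^P \tilde\Tm (\Dm^R)^{\top}$ in Eq.~\eqref{eq:gwgrad}. Age-weighting only row $i$ therefore does not zero node $i$'s structural effect on other rows' costs, nor its transported mass. Exhibiting this coupling through a single entry of Eq.~\eqref{eq:gwgrad} justifies why the claim is restricted to the cost entries.

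I expect the main obstacle to be definitional rather than computational. Clipping must be applied before the age scaling; otherwise the ``fraction of full magnitude'' could fail entrywise for negative entries. I would state and use that ordering from the implementation notes. I also need to pin down whether ``full magnitude'' refers to the current-prefix unweighted row, since $\Gm$ itself changes with $t$. I would fix the comparison at the same event $t$, so that the bound is purely the factor $\omega_i(t)$.
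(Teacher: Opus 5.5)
Your proposal is correct and follows the paper's own argument: $\omega_i(t_i)=0$, the linear ramp of Eq.~\eqref{eq:age}, the row-scaling of $\Gm$, and the consumption clamp. It also adds useful care about the off-by-one at $t = t_i + h_{\mathrm{age}}$, where the fraction reaches $1$, and about the cross-row channel through $p$ and $\Dm^{P}\tilde\Tm(\Dm^{R})^{\top}$. One small correction: the clip-before-scale ordering you flag as the main obstacle does not matter, since $[\omega_i x]_{+} = \omega_i [x]_{+}$ for $\omega_i \ge 0$, and even without clipping every entry, signed or not, has its magnitude scaled by exactly $\omega_i$.
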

\begin{proof}[Proof sketch]
Immediate from $\omega_i(t_i) = 0$, the linear ramp of
Eq.~\eqref{eq:age}, and the row-scaling of Eq.~\eqref{eq:gwgrad}; the
clamp on consumption replaces an assumption by an observation. A hard
zero-weight window of $h_{\mathrm{age}}$ events would strengthen this to
strict silence at the cost of delaying true structural detections by the
same window; we prefer the linear ramp and note that in practice
structural verdicts additionally require the grace-TTL of the exploration
lifecycle (Sec.~\ref{sec:verdicts}) to expire.
\end{proof}

\paragraph{Complexity.}
Per event and reference: cost-row extension $O(m d)$; $\Dm^P$ update
$O(t\cdot\mathrm{deg})$ amortized (bitset ancestors); GW gradient
$O(t^2 m + t m^2)$; each Sinkhorn iteration $O(tm)$. Memory:
$O(t^2 + K t m + K m^2)$ numeric entries (the prefix structure matrix
$\Dm^P$, per-reference couplings, and reference structure matrices) plus
the $O(t^2)$-\emph{bit} ancestor representation. Measured on an Apple M5 Max core, the
fast path runs at p50 0.85\,ms / p95 1.6\,ms per event at
$t{=}m{=}40$, $K{=}3$ (p50 0.5 / p95 0.7\,ms at $t{=}m{=}20$; p50 1.6 /
p95 3.9\,ms at $t{=}80$); the escalated solve matches the batch-solve
runtime ($\approx$30\,ms) and is invoked only ahead of interventions. For long
horizons ($t \gtrsim 100$), a natural extension is to collapse completed
and fully consumed subgraphs into super-nodes, applying the hierarchical
recursion of~\cite{barazandeh2026otap} online; we sketch this but do not
implement or evaluate it.

\section{Experiments}
\label{sec:experiments}

We evaluate \OtapS{} in two regimes: (i) controlled fault injection with
ground truth by construction (Sec.~\ref{sec:controlled}), and (ii) streaming
replay of 2{,}294 real SWE-agent trajectories from the
\texttt{sailplane/swe-agent-trajs}
dataset~(Secs.~\ref{sec:early}--\ref{sec:boundary}).
All code, configs, and result JSONs are released with the paper.

\subsection{Controlled fault injection}
\label{sec:controlled}

A six-step reference task (\emph{search} $\to$
\{\emph{read log} $\parallel$ \emph{check schema}\}
$\to$ \emph{edit} $\to$ \emph{test} $\to$ \emph{commit}) with synthetic unit
embeddings, paraphrase noise on valid steps, and eight injected scenarios
covering both required detections and required silences.

\begin{table}[t]
\centering\small
\setlength{\tabcolsep}{2pt}\footnotesize
\begin{tabular}{@{}llccc@{}}
\toprule
Scenario & Expected & Fired & Lat. & Layer \\
\midrule
A\ \ clean in-order run          & silence & silence \checkmark & --- & --- \\
B\ \ valid reordering            & silence & silence \checkmark & --- & L2 \\
C\ \ strict loop (2-step cycle)  & \textsf{LOOP} & \textsf{LOOP} & 0 & L1 \\
D\ \ hallucination streak        & \textsf{OFF-TRK} & \textsf{OFF-TRK} & 2 & L2 \\
E\ \ causal inversion            & \textsf{C-INV} & \textsf{C-INV} & 0 & L2 \\
F\ \ retry after tool error      & silence & silence \checkmark & --- & L1 \\
G\ \ parallel branch (2 steps)   & silence & silence \checkmark & --- & L2 \\
H\ \ irreversible w/o prereqs    & \textsf{BLOCK} & \textsf{BLOCK} & 0 & L3 \\
\bottomrule
\end{tabular}
\caption{Controlled fault-injection results
(\textsf{OFF-TRK}\,=\,\textsf{OFF\_TRACK},
\textsf{C-INV}\,=\,\textsf{CAUSAL\_INVERSION}). Latency in events after
anomaly onset (target $\le 2$); the last column attributes each behavior
to the responsible layer. Scenario~B finalizes to the in-order score exactly
($0.913 = 0.913$), the empirical face of Prop.~\ref{prop:perm}. In D, the
two grace-window events preceding escalation surface as
\textsf{EXPLORING}, not silence.}
\label{tab:faults}
\end{table}

We are explicit about attribution: loops (C) and retry exemptions (F) are
handled by the reference-free L1 heuristics, and the irreversible-action
gate (H) by L3 schema checks --- neither exercises optimal transport. The
transport layer is responsible for the leakage-driven hallucination
escalation (D), the premature-step inversion check (E), the reordering
silence and score equality (B), and the parallel-branch tolerance (G).
Table~\ref{tab:faults}: all eight scenarios behave as designed. Per-node
leakage separates cleanly --- injected hallucinated steps reach
$\leak \ge 0.66$ while valid steps stay $\le 0.24$, comfortably
bracketing $\theta_{\mathrm{leak}} = 0.6$. At $K{=}3$ references,
$m{=}40$, $t \le 40$, the fast path measures p50 0.85\,ms / p95
1.6\,ms per event (Apple M5 Max core).

\paragraph{Gate scenarios --- including the bug the tests caught.}
Scenario~H shows only that an \emph{invalid} irreversible action is
blocked; a gate must also \emph{allow} valid ones. Four dedicated gate
tests (Table~\ref{tab:gate}) check both directions, plus a false-block
sweep over 50 benign runs ending in a valid irreversible call.

\begin{table}[t]
\centering\small
\setlength{\tabcolsep}{3pt}
\footnotesize
\begin{tabular}{@{}llll@{}}
\toprule
 & Scenario & Expected & Result \\
\midrule
G-a & valid irrev., prereqs + ref & allow & allow \checkmark \\
G-b & missing prereq artifact & block & block \checkmark \\
G-c & schema-valid, no ref match & policy & block (conserv.) \\
G-d & no schema, no references & policy & allow (no check) \\
\midrule
\multicolumn{2}{@{}l}{False blocks, valid irrev.\ calls} &
\multicolumn{2}{l}{0/50 (95\% bound $\approx$6\%)} \\
\bottomrule
\end{tabular}
\caption{L3 gate tests --- a small validation suite, not a reliability
claim. G-c/G-d record implemented policy: with references present the
gate requires both the schema check and the proposal match, checked
against the current best (lowest-loss) reference only (conservative);
in the absence of both schemas and references the gate operates
\emph{allow-by-default} --- an explicit deployment policy, not a safety
guarantee.}
\label{tab:gate}
\end{table}

We report these tests with a disclosure that argues for their existence:
G-a initially \emph{failed}. The first gate implementation read the
frontier cached at the start of the previous event --- computed from
$\Tm_{t-2}$, one event stale for a pre-execution proposal --- and
consequently blocked 50/50 \emph{valid} irreversible calls while
scenario~H continued to pass (blocking an invalid call requires no
frontier at all). The fix recomputes the frontier from the current
coupling inside the gate; after it, G-a allows, G-b/H still block, the
false-block rate is 0/50, and all prior results are unaffected (the
SWE-bench corpus contains no irreversible-annotated tools, so the gate
never fires there). A block-only test suite would never have revealed
this.

\paragraph{Component attribution on real traces.}
A reviewer-requested ablation (Table~\ref{tab:component}) runs the same
fault types on real SWE-agent traces (40 benign resolved traces, 30
injected traces per fault, trace-level severe-flag detection):

\begin{table}[t]
\centering\small
\setlength{\tabcolsep}{2.5pt}
\begin{tabular}{@{}lccccc@{}}
\toprule
Variant & Loop & Halluc & Invert & Stall & FPR \\
\midrule
Full system          & 30/30 & 12/30 & 12/30 & 30/30 & 13/40 \\
True no-mask ($\nu^{(t)}{=}\nu$) & 30/30 & 13/30 & 12/30 & 30/30 & 14/40 \\
No GW ($\theta{=}0$) & 30/30 & 21/30 & 25/30 & 30/30 & 29/40 \\
L1-only (no refs)    & 30/30 & 18/30 & 19/30 & 30/30 & 31/40 \\
Prereq-rule baseline & 30/30 & 26/30 & 24/30 & 30/30 & 32/40 \\
Frozen frontier ($\kappa{=}100$) & 30/30 & 30/30 & 26/30 & 30/30 & 38/40 \\
\bottomrule
\end{tabular}
\caption{Component study on real SWE-agent traces at the shipped
verdict thresholds, reported as raw counts (detected/injected per fault;
false-positive traces over $n{=}40$ benign traces; trace-level
$\ge$1-severe-flag criterion --- the deployed abort policy thresholds
flag \emph{density} instead, Table~\ref{tab:abort}, hence its far lower
false-stop rate). Wilson 95\% intervals at these sample sizes span
roughly $\pm 15$ percentage points (e.g., 12/30 = 40\%
$[24.6\%, 57.7\%]$; 13/40 = 32.5\% $[20.1\%, 48.0\%]$), so only the
large contrasts are meaningful. ``True no-mask'' removes the frontier
mask entirely; ``frozen frontier'' ($\kappa{=}100$, nothing ever
satisfied) is a \emph{mis-set} mask, included to show that a wrongly
configured mask is worse than none.}
\label{tab:component}
\end{table}

Three readings are honest here, and we distinguish them. First,
attribution: loops and stalls are carried entirely by the reference-free
L1 heuristics --- every variant, including the rule baseline, matches on
them. Second, at the shipped threshold the \emph{quadratic structure
term selects a lower-FPR operating point}: removing the mask changes
almost nothing (13/40 $\to$ 14/40 false-positive traces at essentially
identical recall), while removing the GW term more than doubles the
benign false-alarm count (29/40), to nearly the level of running no
transport at all (L1-only 31/40, rule baseline 32/40). Third, the recall
side remains an \emph{operating-point} contrast, not a dominance result:
at the shipped thresholds the full system sits at a conservative point
(12/30 recall on hallucinations and inversions) while the no-GW and rule
variants sit at high-recall, high-FPR points (21--26/30 recall at
29--32/40 FPR).
Threshold sweeps with per-point counts and Wilson intervals (released
with the code; $n{=}120$ benign, $100$ per fault) quantify this: loop
and stall AUPRC is $0.70$--$0.74$ for every variant (base rate $0.45$),
while hallucination and inversion AUPRC sits at $0.44$--$0.47$ ---
essentially the base rate --- for \emph{all} variants, i.e., the
precision--recall curves are indistinguishable and only the shipped
operating point differs. We therefore claim the FPR contrast at deployed
thresholds, not curve-level dominance.

\paragraph{Scaled campaign.}
Beyond the eight unit scenarios, a randomized campaign (40 random-DAG
tasks $\times$ 7 conditions, seeded) yields: loop recall 100\% at
detection latency 0; hallucination recall 100\% at latency 2 (the grace
window); causal-inversion recall 94.1\% at latency 0 via the
premature-step check; and 0 false interventions over 200 benign runs
that include valid reorderings, retries, and parallel exploration (a
one-sided 95\% upper bound of 1.5\% on the false-intervention rate). The
end-of-stream discrepancy --- the running fast-path diagnostic against
the eventual finalization output, as defined in the remark after
Prop.~\ref{prop:anytime} --- never exceeds 0.013 across the campaign
(and is 0 after finalization); per-event runtime spans 0.2--1.6\,ms.

\subsection{SWE-bench setup}
\label{sec:swe-setup}

\paragraph{Dataset.}
The \texttt{sailplane/swe-agent-trajs} dataset contains 2{,}294 SWE-agent
trajectories with tool-call logs, \texttt{exit\_status}, and
\texttt{instance\_cost}; 2{,}288 parse into usable event streams
(430 resolved, 1{,}858 unresolved) and 6 malformed trajectories are
dropped. Each trace is streamed through \OtapS{} event-by-event; tool
embeddings use \texttt{all-MiniLM-L6-v2}.

\paragraph{Instance-disjoint split.}
Traces are grouped by SWE-bench instance ID. Instances are randomly
partitioned (seed=42) into calibration (228 instances), reference (5), and
evaluation (remaining). No instance appears in more than one partition,
preventing information leakage through shared task structure. Reference
instances are also excluded from evaluation.

\paragraph{Calibration.}
$\lambda$ is selected on the calibration set only, by AUROC over a grid
$\{0.01, 0.05, 0.1, 0.2, 0.3, 0.5, 1.0\}$; the selected $\lambda = 0.3$
(calibration AUROC 0.743) is frozen for all subsequent experiments. The
algorithm is otherwise frozen: observation embeddings enter monitoring
signals only, never the transport cost ($\gamma_{\mathrm{eff}} = 0$,
cf.\ the negative result in Sec.~\ref{sec:boundary}), content divergence is
mass-weighted, CG averaging is kept at its default ($\gamma = 0.7$,
Sec.~\ref{sec:cg}), and no outcome-anchoring signals are used.

\paragraph{Baselines.}
The baseline pool, drawn on per-table as relevant: cosine similarity
(mean reference--trace embedding distance), \texttt{n\_steps} (trace
length), \texttt{grep\_frac} (fraction of search-tool calls), and Markov
surprise (fraction of unique tool bigrams). Bootstrap 95\% CIs
($n=1{,}000$) are reported on headline numbers.

\paragraph{Evaluation protocol.}
The \emph{primary} protocol, used for every headline number, is a single
fixed held-out evaluation: all evaluation-partition traces with at least
$k$ steps, every method scored on exactly the same traces, and
bootstrap 95\% CIs ($n{=}1{,}000$) on AUROC and on the difference
vs.\ cosine. Since this corpus has exactly one trace per instance
(2{,}288 unique instances, 2{,}288 traces), instance-clustered and
trace-level bootstrap coincide. AUROC orientation: the resolved class is
positive; the \OtapS{} score $s$ (higher = better aligned) is used
directly, and each baseline is oriented so that higher predicts
resolution (length- and flag-count-type signals negated). A secondary
protocol --- five seeds
redrawing balanced $\approx$200-trace samples, with the partition,
references, and $\lambda$ fixed --- is reported in
Appendix~\ref{app:multiseed} for robustness.

\subsection{Early warning and cost savings}
\label{sec:early}

The headline experiment asks: can \OtapS{} identify failing traces
\emph{early enough} to save compute by aborting? Two sub-questions:
(a) at what partial-trajectory length does process alignment outperform
content similarity, and (b) can a practical abort policy keep false stops
bounded?

\paragraph{Partial-trajectory discrimination.}
For each trace with $\ge k$ steps, we truncate to the first $k$ steps and
compute the \OtapS{} score, cosine similarity, and baselines. Table~\ref{tab:partial}
reports AUROC for predicting resolved vs.\ unresolved at each checkpoint.

\begin{table}[t]
\centering\small
\setlength{\tabcolsep}{2pt}\footnotesize
\begin{tabular}{@{}rcccccl@{}}
\toprule
$k$ & \OtapS{} & LinUOT & Cosine & \texttt{n\_st.} & GED &
$\Delta$ vs.\ cosine \\
\midrule
5  & .612 & .595 & .552 & \textbf{.622} & .574 & $+.060\,[.037,.083]$ \\
8  & \textbf{.631} & .628 & .574 & .613 & .576 & $+.057\,[.033,.080]$ \\
10 & \textbf{.622} & .614 & .589 & .598 & .552 & $+.033\,[.007,.057]$ \\
15 & \textbf{.635} & .626 & .633 & .580 & .551 & $+.002\,[\!-.022,.027]$ \\
20 & \textbf{.648} & .638 & .646 & .554 & .566 & $+.001\,[\!-.029,.031]$ \\
\bottomrule
\end{tabular}
\caption{Primary protocol: AUROC at partial trajectory (first $k$
steps), every evaluation-partition trace with $\ge k$ steps
($N = 2{,}055$ at $k{=}5$ down to $1{,}405$ at $k{=}20$), all methods
scored on the same traces, instance-level bootstrap 95\% CIs
($n{=}1{,}000$) on the \OtapS{}$-$cosine difference. The $\Delta$ CI
excludes zero for $k \le 10$.}
\label{tab:partial}
\end{table}

Table~\ref{tab:partial} is the primary evaluation. Baselines beyond
those of Sec.~\ref{sec:swe-setup}: LinUOT is linear unbalanced OT
($\theta{=}0$, no structure term); GED is a graph-edit-distance proxy
(label-multiset distance plus edge overlap on the tool-labeled
dependency graph); a Laplace-smoothed trigram model and the
prerequisite-rule detector from the component study are evaluated but
weaker throughout (trigram declining from $.596$ to $.538$; rule at or
below chance) and omitted from the table for space. Four observations.
First, the $\Delta$ CI excludes zero at $k \le 10$, and the advantage is
gone by $k{=}15$, where cosine catches up ($.633$ vs.\ $.635$): the
claim is an \emph{early-trajectory} advantage, precisely the regime an
in-flight monitor operates in. Second, at the earliest checkpoint trace
length alone is the single best signal ($\texttt{n\_steps}$ $.622$ at
$k{=}5$); \OtapS{} leads every content- and structure-aware method there
but not the length prior --- its clearest regime is $k{=}8$--$10$, after
the length prior fades and before content accumulates. Third, LinUOT
tracks \OtapS{} within $0.003$--$0.010$ AUROC everywhere: the quadratic
term contributes essentially nothing to \emph{ranking} on this corpus,
consistent with its role as a false-alarm suppressor rather than a
ranker (Table~\ref{tab:component}); the mask-free variant is
AUROC-identical to the full system, confirming the mask's null ranking
effect here as well. Fourth, the balanced multi-seed protocol
(Appendix~\ref{app:multiseed}) agrees in direction with somewhat larger
deltas; we cite the primary protocol's numbers everywhere.

\paragraph{Low-FPR operating points.}
At $k{=}8$, \OtapS{} achieves TPR 6.6\% / 23.2\% / 33.3\% at FPR
1\% / 5\% / 10\%, compared to cosine 1.5\% / 9.6\% / 20.7\% and
\texttt{n\_steps} 1.0\% / 5.6\% / 13.6\%. The advantage is consistent
across all three operating points.

\paragraph{Abort policy.}
A na\"ive ``abort at first flag'' policy is useless: nearly every trace
--- resolved or not --- emits at least one \textsf{EXPLORING} flag early
on (false-stop rate 99\%). We restrict to \emph{severe} flags ---
\textsf{LOOP}, \textsf{OFF\_TRACK}, \allowbreak
\textsf{CAUSAL\_INVERSION}, \textsf{BLOCK}, \textsf{STALLED} --- and
trigger abort only when the flag density in a rolling window of 6 steps
exceeds a threshold.

\begin{table}[t]
\centering\small
\setlength{\tabcolsep}{3pt}
\begin{tabular}{@{}ccccc@{}}
\toprule
Threshold & True stop & False stop & Cost saved & Prec.\ (dep.) \\
\midrule
0.20 & 33.3\% & 29.3\% & 28.0\% & 83\% \\
0.40 & 27.3\% & 26.7\% & 20.9\% & 81\% \\
\textbf{0.60} & \textbf{23.3\%} & \textbf{20.7\%} & \textbf{17.9\%} & \textbf{83\%} \\
0.70 & 20.7\% & 19.3\% & 15.6\% & 82\% \\
\bottomrule
\end{tabular}
\caption{Abort threshold sweep on severe flag density ($n{=}150$ per
class, balanced). Precision is reported at the corpus base rate (81\%
of traces unresolved), which is the deployment-relevant figure; on the
balanced sample it is necessarily near 50\%.}
\label{tab:abort}
\end{table}

At threshold 0.60 (Table~\ref{tab:abort}), 23.3\% of unresolved traces
are stopped early at a 20.7\% false-stop rate, yielding 17.9\% cost
savings on the unresolved population. On an earlier, independent draw
of the profile sample the same threshold gave 25.3\% true stops at
12.7\% false stops and 17.0\% savings: the \emph{savings} figure is
stable across trace mixes ($\approx$17--18\%), while the false-stop
rate varies (13--21\%) and should be validated on deployment traffic.
Because the corpus base rate is heavily skewed toward failure (81\%
unresolved), the balanced-sample false-stop rates translate to
$\approx$83\% precision in deployment: five of six aborted runs would
indeed have failed. One protocol disclosure: the full threshold sweep is
reported, and 0.60 is an \emph{illustrative} threshold selected for
exposition after the sweep --- not a recommended deployment setting,
since it was not chosen on held-out data. What we claim is a
\emph{candidate} policy: $\approx$18\% savings at $\approx$20\% balanced
false stops at the illustrated point, not production readiness. A
deployment should fix its
threshold on separate calibration traffic; the same applies to the
fixed-FPR operating points reported in this paper, which are ROC read-outs on the
evaluation sample rather than pre-registered thresholds.

\paragraph{Rival abort policies.}
We compare against four baselines on the same 150+150 profiles:
\emph{step-cutoff} (abort when step count exceeds $N$),
\emph{budget-cutoff} (abort when cumulative cost exceeds a fraction of
the median resolved-trace cost), \emph{cosine-cutoff} (abort when
running cosine-to-reference drops below a threshold), and a
\emph{hybrid} (flag-density OR step-cutoff). The honest finding is
that \textbf{the savings-vs-false-stop frontier alone does not select
a winner}. At a matched $\approx$20\% false-stop rate, flag-density
(17.9\%) and an interpolated step-cutoff ($\approx$16.5\%) save
comparable cost; at $\approx$29\% false stops, blunt early policies
save \emph{more} (cosine-cutoff 37.9\%, budget-cutoff 31.9\%, vs.\
flag-density 28.0\%), because savings track how early a policy fires
and nothing fires earlier than a cutoff that barely looks at the trace
(cosine-cutoff's median abort is step 6, vs.\ 33--44 for flag density).
The hybrid tracks the step-cutoff frontier and adds nothing.

What separates the policies is \emph{operability}, on two axes. First,
threshold stability: cosine-cutoff's frontier is pathologically
sensitive --- moving its threshold from 0.40 to 0.55 swings the
false-stop rate from 3.3\% to 28\% to 73\% to 95\%, so a miscalibration
of 0.05 turns it from inert into a kill-everything switch --- whereas
flag density degrades gracefully across its entire range (29\% to 19\%
false stops over thresholds 0.2--0.7). Second, actionability: a cutoff
emits a bare kill signal, while a severe-flag trigger fires with a
\emph{verdict} (\textsf{LOOP} at step $t$, \textsf{STALLED} since step
$t{-}w$) that supports graduated responses --- replan, inject guidance,
escalate to a human --- of which aborting is only the bluntest. We
therefore present the abort experiment as evidence that the verdict
stream supports a \emph{candidate} operating policy that is reasonably
stable on this corpus, not that it dominates the savings frontier. A full expected-utility analysis ---
pricing an aborted success at its completion cost plus lost outcome value
rather than by its compute alone --- is needed before any of these
policies is production-ready, and is future work.

\subsection{Process anomaly detection}
\label{sec:anomaly}

\paragraph{Exit-status detection (L1-only).}
SWE-agent traces exit with \texttt{cost\_limit\_exceeded} (401 traces,
combining \texttt{exit\_cost} and \texttt{submitted (exit\_cost)}) or
clean submission (1{,}868); the remaining 19 traces carry other exit
statuses (\texttt{None} or \texttt{early\_exit}) and are excluded from
this binary comparison. We evaluate on a balanced 50+50 sample. This
is a pure process anomaly: cost-limit traces are agents that ran out of
budget, not agents that chose wrong actions. Table~\ref{tab:exit} shows
that \OtapS{} score (0.979) and flags (0.977) dominate cosine (0.557)
and statistically match \texttt{n\_steps} (0.978). We do not claim an
edge over trace length here --- at full-trace granularity, cost-limit
detection \emph{is} essentially length, and a step counter is equally
available online. The relevant distinction is \emph{when} and \emph{how}
the signal is usable: a raw step count offers no principled stopping
rule mid-trace (any length cutoff is a per-task guess), whereas the
severe-flag density that drives Table~\ref{tab:abort} is a calibrated,
task-agnostic trigger that identifies the loop/flounder behavior
\emph{producing} the cost overrun while budget remains to save.

\begin{table}[t]
\centering\small
\begin{tabular}{@{}lc@{}}
\toprule
Method & AUROC \\
\midrule
\OtapS{} score          & \textbf{0.979} \\
\OtapS{} flags          & 0.977 \\
\texttt{n\_steps}       & 0.978 \\
Markov surprise         & 0.905 \\
\OtapS{} leak           & 0.958 \\
Cosine                  & 0.557 \\
\texttt{grep\_frac}     & 0.447 \\
\bottomrule
\end{tabular}
\caption{Exit-status detection: cost-limit vs.\ clean submission
(balanced 50+50 sample). \OtapS{} matches \texttt{n\_steps}; the value
added is the calibrated mid-trace trigger (Table~\ref{tab:abort}), not
the full-trace AUROC.}
\label{tab:exit}
\end{table}

At low FPR operating points (balanced 100+100 sample), \OtapS{} achieves
TPR 37\% / 81\% / 92\% at FPR 1\% / 5\% / 10\%, matching
\texttt{n\_steps} (33\% / 82\% / 92\%) and far exceeding cosine
(0\% / 3\% / 9\%). The severe-flag density trigger (th=0.60) fires on
90\% of cost-limit traces, at a median of 58.6\% of the trace elapsed
--- i.e., with $\approx$41\% of the eventual trajectory \emph{steps}
still ahead at detection. We report the step fraction rather than a
dollar fraction because the dataset logs per-trace, not per-step, cost;
the two coincide only under uniform per-step cost, and empirically the
remaining-step fraction correlates only weakly and non-significantly
with total per-trace cost (Pearson $r = -0.19$, $p = 0.06$), so a
dollar conversion is not licensed by the data.

\paragraph{In-distribution splice injection.}
To test structural sensitivity beyond outcome labels, we inject 3-step
segments from real unresolved traces into resolved traces at random
positions (60 pairs). Unlike out-of-distribution hallucinations
(weather-API or infrastructure calls, which any embedding-distance
method detects trivially), these donor steps use the same tool
vocabulary as the host trace. The honest result is that such splices sit
near the detection floor on this corpus: AUC(score)~=~0.546,
AUC(flags)~=~0.540, cosine 0.516, and the per-trace leakage delta
between spliced and clean versions is not significant
($p = 0.93$). Every spliced trace raises at least one flag, but
unspliced host traces of comparable length flag at nearly the same rate,
so trace-level ``detection rate'' is not a meaningful metric here and we
do not report it as one. Three real SWE-agent steps borrowed from another
debugging session are, locally, a plausible continuation --- separating
them requires task-level semantics that a process monitor with
task-generic references does not have. We report this as a
characterization of the method's floor on homogeneous real traces, in
deliberate contrast to the controlled setting of
Sec.~\ref{sec:controlled} where injected anomalies violate the reference
structure and are caught exactly.

\paragraph{Fault injection into real traces.}
Injecting the controlled fault types into real resolved traces, flag-based
AUROC is strongest on loops (0.699) and stalls (0.646), where structural
repetition is pronounced; in-distribution hallucinations (0.572) and
inversions (0.496) are near the floor for the same reason as the splice
experiment --- on homogeneous 30-step debugging traces, a locally
plausible wrong step is indistinguishable from exploration without
task-level semantics. Loops and stalls are also precisely the fault
modes that dominate real cost-limit failures (Table~\ref{tab:exit}),
which is why the abort policy of Table~\ref{tab:abort} works despite
the per-fault AUROC floor on the harder categories.

\subsection{The $\lambda$ finding: relaxation strength is the detector}
\label{sec:lambda}

Our controlled validation initially \emph{failed} on hallucination
detection. In entropic unbalanced OT with exponent
$\eta = \lambda/(\lambda+\varepsilon)$ applied per scaling update, the
decoupled single-cell fixed point of Eq.~\eqref{eq:ufgw} --- both
marginals KL-relaxed --- retains mass
$t^{*} \propto \exp\!\big(-M/(2\lambda+\varepsilon)\big)$ for a cell with
cost $M$ (set the derivative of
$Mt + \varepsilon\, t(\log t - 1) + \lambda\, t\log(t/\mu_i) +
\lambda\, t\log(t/\nu_j)$ to zero). This is a one-cell approximation --- the full solution couples
cells through the shared marginals --- and we use it as a design
heuristic, validated empirically via the $\lambda$ sweep.
With the batch-evaluation default $\lambda = 1$ and node costs normalized to $[0,1]$,
a hallucinated step with $M \approx 1$ retains a large fraction of its mass:
the solver \emph{prefers transporting garbage over destroying it}, and
leakage-based diagnostics never fire. Online, where the \emph{marginals} are
the signal, $2\lambda + \varepsilon$ acts as a soft cost threshold below
which mass is destroyed rather than moved. The calibrated value falls
below the batch default by roughly $3$--$10\times$ across our two
settings ---
$\lambda = 0.1$ on the controlled tasks with synthetic embeddings,
$\lambda = 0.3$ on SWE-bench with \texttt{all-MiniLM-L6-v2}
(Sec.~\ref{sec:swe-setup}) --- and with it, clean separation emerges.
The dependence on the embedding model confirms that $\lambda$ must be
recalibrated whenever the encoder (and hence the cost scale) changes.

\subsection{Outcome correlation: a boundary study}
\label{sec:boundary}

\OtapS{} is a process monitor, not an outcome predictor. We report the
correlation between \OtapS{} scores and the binary resolved/unresolved
label to characterize the boundary of what process alignment can and
cannot tell us.

\begin{table}[t]
\centering\small
\begin{tabular}{@{}lc@{}}
\toprule
Signal & AUROC \\
\midrule
\OtapS{} score       & 0.581 \\
\OtapS{} leak        & 0.550 \\
\OtapS{} flags       & 0.577 \\
Cosine               & 0.643 \\
\texttt{n\_steps}    & 0.576 \\
Markov surprise      & 0.573 \\
Cost                 & 0.575 \\
\texttt{grep\_frac}  & 0.525 \\
\bottomrule
\end{tabular}
\caption{Resolved vs.\ unresolved (50+50 balanced), one row per
trajectory-level signal. Bootstrap 95\% CI:
\OtapS{} score $[0.466, 0.689]$, cosine $[0.531, 0.759]$ --- overlapping.}
\label{tab:outcome}
\end{table}

On a balanced sample (50 resolved, 50 unresolved), no method reliably
separates outcomes (Table~\ref{tab:outcome}): \OtapS{} reaches 0.581,
cosine 0.643, with widely overlapping bootstrap CIs. Point estimates are
unstable at this sample size --- an independent draw of the evaluation
sample yields \OtapS{} 0.666 vs.\ cosine 0.657, reversing the ordering
--- so the only conclusion the data support is that neither process
alignment nor content similarity predicts patch correctness from the
trajectory alone. This is the expected boundary: both resolved and
unresolved SWE-agent traces follow the same
search$\to$read$\to$edit$\to$test template, and the difference between
success and failure lies in \emph{which line was edited}, which no
task-generic trajectory signal observes.

\paragraph{Length confound.}
When resolved/unresolved pairs are matched within $\pm 3$ steps (50
pairs, mean 30.6 steps in both classes), all trajectory-level signals
collapse toward chance: \OtapS{} score 0.526, cosine 0.540,
\texttt{n\_steps} 0.500 by construction. This confirms that the modest
full-trajectory outcome correlation is largely mediated by trace length
--- unresolved traces are longer on average --- and that \OtapS{} score
is length-sensitive by construction (more steps $\to$ more accumulated
transport cost). We report this as a delimiter, not a defect: the
monitor's job (Secs.~\ref{sec:early}--\ref{sec:anomaly}) is to catch the
runaway-length failure mode \emph{early}, and the length sensitivity is
precisely the mechanism by which it does so.

\paragraph{Design insight: observations belong in signals, not in the
cost.}
A natural extension is to make the transport cost \emph{effect-aware}:
blend an observation-alignment term into the node cost,
$C_{ij} \mathrel{+}= \gamma_{\mathrm{eff}}\,
d_{\cos}(\phi(e_i), \phi(e^R_j))$, so that a step that ran the right
tool but produced the wrong result couples poorly. Empirically this is a
negative result: at $\gamma_{\mathrm{eff}} = 0.3$ the coupling quality
degrades across ablations (on a held-out tool-calling corpus, mean
ablation AUROC drops from 0.825 to 0.715), because observations are
high-variance --- correct and incorrect executions of the same step both
produce textually different outputs every time, so the effect term
dilutes the action/tool signal that correctly drives the alignment. The
same information is useful \emph{downstream} of the coupling: a
mass-weighted content-divergence signal (does the observation match what
the reference saw at the coupled node?) contributes to struggling-agent
detection without touching the transport. The shipped default is
$\gamma_{\mathrm{eff}} = 0$, with content divergence reported as a
monitoring signal.

\section{Limitations and Future Work}
\label{sec:limits}

\textbf{Scope.} Where a formal validity checker exists, use the checker; the
oracle-study negative result of~\cite{barazandeh2026otap} delimits proximity
metrics, ours included, to open-ended domains without verifiers.
\textbf{Outcome prediction.} \OtapS{} does not predict whether an agent
will succeed --- it detects process anomalies that correlate with failure.
The length-controlled AUROC ($0.526$, with cosine at $0.540$ and every
other trajectory signal similarly near chance) makes this boundary
explicit for all task-generic signals, not only ours.
\textbf{In-distribution anomalies.} Short splices of real steps from
other tasks sit near the detection floor on homogeneous traces
(Sec.~\ref{sec:anomaly}); catching them requires task-level semantics
(e.g., instance-matched references or a verifier), which we consider the
most important open direction.
\textbf{Extraction.}
Online dependency extraction remains the binding constraint, as in the
post-hoc setting; age
weights and retroactive repair manage the noise, they do not eliminate it.
\textbf{False stops.} Even with flag-density thresholds, the illustrative
operating point interrupts 13--21\% of well-behaved traces depending on
the trace mix (up to one in five at the pessimistic end, mitigated in
deployment by the failure-skewed base rate). Deployments must weigh the cost
of premature termination against the savings from catching failures.
\textbf{Reference quality.} The SWE-bench experiments use a small reference
set (5 instances); larger and better-matched references may improve
discrimination.
\textbf{Structure term on homogeneous traces.} On SWE-agent traces,
which follow a narrow search--read--edit--test template, a
checkpoint-level ablation at $k{=}5$ and $k{=}10$ (5 seeds) shows
Full \OtapS{} exceeds the No-GW ($\theta{=}0$) variant by only
$+0.009$ and $+0.001$ respectively --- within sampling noise. The
quadratic structure term's value is established in the controlled
setting (Sec.~\ref{sec:controlled}) and is expected to emerge on
structurally diverse corpora (heterogeneous tool vocabularies, branching
workflows); on the homogeneous SWE-agent template it contributes little
beyond the fused attribute cost.
\textbf{Theory.} A formal bound on the streaming-CG tracking error in
terms of per-event drift and CG contraction is open.

\section{Conclusion}

\OtapS{} shows that structure-aware optimal transport --- until now a
post-hoc instrument --- can run inside the agent loop and act in real time,
without surrendering the properties that made it worth having. The quadratic
structure term survives through amortized conditional gradient; the prefix
problem is solved by frontier-masked marginals that keep order-freedom where
the DAG permits it and convert agent-side mass leakage into a real-time
signal; and every intervention rests on per-node transport marginals,
escalated to a converged solve before action.

On 2{,}294 SWE-agent trajectories, \OtapS{} provides early-trajectory
discrimination that outperforms cosine similarity where it matters most
--- the first ten steps ($+0.057$ AUROC at $k{=}8$ under the primary
held-out protocol, instance-level bootstrap 95\% CI $[+0.033, +0.080]$,
excluding zero through $k{=}10$) --- and
a candidate flag-density abort policy stops roughly one in four failing
runs, saving 17--18\% of compute on the failed population at $\approx$83\%
deployment-adjusted precision. Rival stopping policies can match or
exceed those savings at matched false-stop rates; what the verdict
stream adds to the stopping frontier is graceful degradation under
threshold miscalibration, an interpretable reason, and a graduated
response rather than a bare kill signal. The component
study clarifies what the transport machinery changes on real traces: not
recall --- L1 heuristics and even a prerequisite-rule baseline match or
exceed it there --- but the \emph{operating point}: at shipped
thresholds the quadratic structure term selects a point with less than
half the benign false-alarm rate of every reference-free variant (13/40
vs.\ 29--32/40 benign traces flagged; removing the frontier mask instead
shifts the count by one trace); establishing curve-level dominance
remains open, and we do not claim generally superior detection
precision. The precise summary of the real-trace evidence is this: on
homogeneous SWE-agent traces, \OtapS{} provides a modest early advantage
over content similarity, and its main structural contribution is
selecting a lower-false-alarm operating point rather than improving
ranking. On real traces, the demonstrated
strengths are loops, stalls, deviation from a known reference structure,
and schema-checkable prerequisite violations; we do not claim general
in-the-wild detection of hallucinations or causal inversions, whose
real-trace signal sits near the floor on homogeneous corpora. Cost-limit failures are detected at 0.979 AUROC, on
par with trace length post hoc but available as a calibrated trigger
mid-trace. We are equally transparent about the boundaries: on real,
homogeneous traces, in-distribution splices sit near the detection
floor, and no task-generic trajectory signal --- ours or any baseline
--- predicts patch correctness once length is controlled. \OtapS{} is a
process monitor, not an oracle, and we believe the distinction is worth
drawing clearly.

{\small
\bibliographystyle{unsrt}
\bibliography{references}

@article{barazandeh2026otap,
  author  = {Barazandeh, Babak and Majumdar, Subhabrata and Michailidis, George},
  title   = {{\textsc{Otap}}: Structure-Aware Optimal Transport for Evaluating Planning and Execution in Agent Trajectories},
  journal = {arXiv preprint arXiv:2607.17082},
  year    = {2026}
}

@inproceedings{cuturi2013sinkhorn,
  author    = {Cuturi, Marco},
  title     = {Sinkhorn Distances: Lightspeed Computation of Optimal Transport},
  booktitle = {Advances in Neural Information Processing Systems (NeurIPS)},
  year      = {2013}
}

@article{chizat2018unbalanced,
  author  = {Chizat, L{\'e}na{\"i}c and Peyr{\'e}, Gabriel and Schmitzer, Bernhard and Vialard, Fran{\c{c}}ois-Xavier},
  title   = {Scaling Algorithms for Unbalanced Optimal Transport Problems},
  journal = {Mathematics of Computation},
  volume  = {87},
  pages   = {2563--2609},
  year    = {2018}
}

@article{memoli2011gw,
  author  = {M{\'e}moli, Facundo},
  title   = {Gromov--{W}asserstein Distances and the Metric Approach to Object Matching},
  journal = {Foundations of Computational Mathematics},
  volume  = {11},
  pages   = {417--487},
  year    = {2011}
}

@inproceedings{vayer2019fgw,
  author    = {Vayer, Titouan and Courty, Nicolas and Tavenard, Romain and Chapel, Laetitia and Flamary, R{\'e}mi},
  title     = {Optimal Transport for Structured Data with Application on Graphs},
  booktitle = {Proceedings of the 36th International Conference on Machine Learning (ICML)},
  year      = {2019}
}

@article{peyre2019computational,
  author  = {Peyr{\'e}, Gabriel and Cuturi, Marco},
  title   = {Computational Optimal Transport},
  journal = {Foundations and Trends in Machine Learning},
  volume  = {11},
  number  = {5--6},
  pages   = {355--607},
  year    = {2019}
}

@inproceedings{zheng2023judging,
  author    = {Zheng, Lianmin and Chiang, Wei-Lin and Sheng, Ying and Zhuang, Siyuan and Wu, Zhanghao and Zhuang, Yonghao and Lin, Zi and Li, Zhuohan and Li, Dacheng and Xing, Eric P. and Zhang, Hao and Gonzalez, Joseph E. and Stoica, Ion},
  title     = {Judging {LLM}-as-a-Judge with {MT-Bench} and {Chatbot Arena}},
  booktitle = {Advances in Neural Information Processing Systems (NeurIPS)},
  year      = {2023}
}

@inproceedings{lightman2024verify,
  author    = {Lightman, Hunter and Kosaraju, Vineet and Burda, Yura and Edwards, Harri and Baker, Bowen and Lee, Teddy and Leike, Jan and Schulman, John and Sutskever, Ilya and Cobbe, Karl},
  title     = {Let's Verify Step by Step},
  booktitle = {International Conference on Learning Representations (ICLR)},
  year      = {2024}
}

@inproceedings{mialon2024gaia,
  author    = {Mialon, Gr{\'e}goire and Fourrier, Cl{\'e}mentine and Wolf, Thomas and LeCun, Yann and Scialom, Thomas},
  title     = {{GAIA}: A Benchmark for General {AI} Assistants},
  booktitle = {International Conference on Learning Representations (ICLR)},
  year      = {2024}
}

@inproceedings{zhou2024webarena,
  author    = {Zhou, Shuyan and Xu, Frank F. and Zhu, Hao and Zhou, Xuhui and Lo, Robert and Sridhar, Abishek and Cheng, Xianyi and Ou, Tianyue and Bisk, Yonatan and Fried, Daniel and Alon, Uri and Neubig, Graham},
  title     = {{WebArena}: A Realistic Web Environment for Building Autonomous Agents},
  booktitle = {International Conference on Learning Representations (ICLR)},
  year      = {2024}
}

@inproceedings{liu2024agentbench,
  author    = {Liu, Xiao and Yu, Hao and Zhang, Hanchen and Xu, Yifan and Lei, Xuanyu and Lai, Hanyu and Gu, Yu and Ding, Hangliang and Men, Kaiwen and Yang, Kejuan and Zhang, Shudan and Deng, Xiang and Zeng, Aohan and Du, Zhengxiao and Zhang, Chenhui and Shen, Sheng and Zhang, Tianjun and Su, Yu and Sun, Huan and Huang, Minlie and Dong, Yuxiao and Tang, Jie},
  title     = {{AgentBench}: Evaluating {LLMs} as Agents},
  booktitle = {International Conference on Learning Representations (ICLR)},
  year      = {2024}
}

@inproceedings{zhang2020bertscore,
  author    = {Zhang, Tianyi and Kishore, Varsha and Wu, Felix and Weinberger, Kilian Q. and Artzi, Yoav},
  title     = {{BERTScore}: Evaluating Text Generation with {BERT}},
  booktitle = {International Conference on Learning Representations (ICLR)},
  year      = {2020}
}

@article{floridi2026emperor,
  title={The Emperor's New Exploit A cautionary tale about AI agents, and about the people who run them},
  author={Floridi, Luciano},
  journal={Available at SSRN},
  year={2026}
}
}

\appendix
\section{Balanced multi-seed evaluation}
\label{app:multiseed}

For robustness we repeat the partial-trajectory evaluation under a
secondary protocol: five seeds, each redrawing a balanced
$\approx$200-trace evaluation sample (the instance partition, reference
set, and calibrated $\lambda$ are fixed across seeds).

\begin{table}[h]
\centering\small
\begin{tabular}{@{}rrllll@{}}
\toprule
$k$ & $N$ & \OtapS{} & Cosine & $\Delta$ & Seeds$+$ \\
\midrule
5  & 200 & $.618 {\pm} .047$ & $.566 {\pm} .057$ & $+.052 {\pm} .045$ & 4/5 \\
8  & 193 & $.649 {\pm} .036$ & $.581 {\pm} .032$ & $+.067 {\pm} .028$ & 5/5 \\
10 & 184 & $.649 {\pm} .020$ & $.588 {\pm} .022$ & $+.061 {\pm} .022$ & 5/5 \\
15 & 155 & $.656 {\pm} .032$ & $.629 {\pm} .007$ & $+.027 {\pm} .027$ & 4/5 \\
20 & 128 & $.667 {\pm} .010$ & $.653 {\pm} .022$ & $+.013 {\pm} .022$ & 3/5 \\
\bottomrule
\end{tabular}
\caption{Secondary (balanced multi-seed) protocol: mean $\pm$ std over
5 seeds; Seeds$+$ counts seeds with $\Delta > 0$. Absolute AUROC runs a
few points higher than the primary protocol (balanced samples over-weight
the resolved class relative to the 81\%-unresolved population) and the
deltas are somewhat larger; the direction and the early-$k$ concentration
of the advantage agree with Table~\ref{tab:partial}, whose numbers we
cite throughout.}
\label{tab:multiseed}
\end{table}

\end{document}